
\documentclass{article}

\usepackage{microtype}
\usepackage{graphicx}
\usepackage{subfigure}
\usepackage{booktabs} 

\usepackage{hyperref}



\usepackage[accepted]{icml2024}

\usepackage{amsmath}
\usepackage{amssymb}
\usepackage{mathtools}
\usepackage{amsthm}

\usepackage[capitalize,noabbrev]{cleveref}

\theoremstyle{plain}
\newtheorem{theorem}{Theorem}[section]
\newtheorem{proposition}[theorem]{Proposition}

\theoremstyle{definition}
\newtheorem{definition}[theorem]{Definition}

\theoremstyle{remark}

\usepackage[textsize=tiny]{todonotes}

\usepackage{adjustbox}
\usepackage{multirow}
\usepackage{booktabs}
\usepackage{xspace}
\usepackage{wrapfig}
\newcommand{\ie}{i.e., }
\newcommand{\eg}{e.g., }


\definecolor{first}{HTML}{ef8700}
\definecolor{second}{HTML}{785EF0}
\definecolor{seqcolor}{HTML}{d55e00} 
\definecolor{pascalcolor}{HTML}{029e73} 
\definecolor{linkcolor}{HTML}{0173b2} 
\definecolor{fourth}{HTML}{FFC107}


\newcommand{\one}[1]{\textcolor{first}{\bf#1}}

\newcommand{\adgn}{CTAN\xspace}
\newcommand{\fulladgn}{\underline{c}ontinuous-\underline{t}ime graph \underline{a}nti-symmetric \underline{n}etwork\xspace}
\newcommand{\myparagraph}[1]{\noindent \textbf{#1.}}

\usepackage{multirow}
\usepackage{soul}

\icmltitlerunning{Long Range Propagation on Continuous-Time Dynamic Graphs}

\begin{document}

\twocolumn[
\icmltitle{Long Range Propagation on Continuous-Time Dynamic Graphs}



\icmlsetsymbol{equal}{*}

\begin{icmlauthorlist}
\icmlauthor{Alessio Gravina}{equal,pisa}
\icmlauthor{Giulio Lovisotto}{equal,huawei}
\icmlauthor{Claudio Gallicchio}{pisa}
\icmlauthor{Davide Bacciu}{pisa}
\icmlauthor{Claas Grohnfeldt}{huawei}
\end{icmlauthorlist}

\icmlaffiliation{pisa}{Department of Computer Science, University of Pisa, Pisa, Italy}
\icmlaffiliation{huawei}{Huawei Technologies, Munich, Germany}

\icmlcorrespondingauthor{Alessio Gravina}{alessio.gravina@phd.unipi.it}

\icmlkeywords{Machine Learning, ICML, deep graph network, graph neural network, long range interactions, continuous time dynamic graphs, dynamic graphs, temporal graphs, ordinary differential equations}

\vskip 0.3in
]



\printAffiliationsAndNotice{\icmlEqualContribution} 

\begin{abstract}
Learning Continuous-Time Dynamic Graphs (C-TDGs) requires accurately modeling spatio-temporal information on streams of irregularly sampled events.
While many methods have been proposed recently, we find that most message passing-, recurrent- or self-attention-based methods perform poorly on \textit{long-range} tasks. 
These tasks require correlating information that occurred ``far'' away from the current event, either spatially (higher-order node information) or along the time dimension (events occurred in the past).
To address long-range dependencies, we introduce \fulladgn (CTAN).
Grounded within the ordinary differential equations framework, our method is designed for efficient propagation of information.
In this paper, we show how CTAN's (i) long-range modeling capabilities are substantiated by theoretical findings and how  (ii) its empirical performance on synthetic long-range benchmarks 
and real-world benchmarks is superior to other methods.
Our results motivate CTAN's ability to propagate  long-range information in C-TDGs as well as the inclusion of long-range tasks as part of temporal graph models evaluation.


\end{abstract}

\section{Introduction}
Graphs are a highly expressive abstraction for modelling entities and their relations, \eg molecular structures, recommender systems, or traffic networks~\citep{social_network,google_maps,gravina_schizophrenia, errica_hidden_2023,Cini_Marisca_Bianchi_Alippi_2023, gravina_covid}.
Deep Graph Networks (DGNs)~\citep{BACCIU2020203, GNNsurvey} have lately emerged as a family of deep learning models that can effectively process and learn such structured information. 
While most of the proposed DGNs have been designed for static graphs, many real-world scenarios are inherently \textit{dynamic} in nature.
Examples include the continual activities and interactions between members of social as well as communication networks, recurrent purchases by users on e-commerce platforms, or evolving interactions of processes with files in an operating system.
A number of works investigated models that can process the temporal dimension of a dynamic graph~\citep{dynamicgraph_survey, gravina2023deep}, with recent interest in graphs defined through irregularly sampled event streams, known as Continuous-Time Dynamic Graphs (C-TDGs).
%
However, dynamic methods, which are based on static DGNs and recurrent neural networks (RNNs) as backbone architectures, often retain the limitations of their core components. Specifically, static DGNs suffer from the \textit{over-squashing} phenomenon~\citep{bottleneck}, which prevents the final network to learn and propagate long range information~\citep{gravina2023antisymmetric}. Similarly, RNNs often face similar challenges in propagating long-term dependencies, as evidenced by \citet{AntisymmetricRNN}, mainly due to exploding or vanishing gradients. With growing evidence from the static and dynamic case~\citep{dwivedi2022long, yu2023towards} that long-range dependencies are necessary for effective learning, the ability to learn properties beyond 
the event's temporal and spatial locality remains an open challenge in the C-TDG domain.  

In this paper, we propose the \fulladgn (\adgn), a framework for learning of C-TDGs with \emph{scalable} long range propagation of information, thanks to properties inherited from stable and non-dissipative ordinary differential equations (ODEs).
We establish theoretical conditions for achieving stability and non-dissipation in the \adgn ODE by employing anti-symmetric weight matrices, which is the key factor for modeling long-range spatio-temporal interactions.
The \adgn layer is derived from the forward Euler discretization of the designed differential equation. 
The formulation of \adgn allows scaling the radius of propagation of information depending on the number of discretization steps, \ie the number of layers in the final architecture.
Remarkably, even with a limited number of layers, the non-dissipative behavior enables the transmission of information for a past event as new events occur, since node states are used to efficiently retain and propagate historical information. 
This mechanism permits scaling the single event propagation to cover a larger portion of the C-TDG.
The general formulation of the node update state function allows the implementation of the more appropriate dynamic to the problem at hand. Specifically, it allows the inclusion of static DGN dynamics, thus reinterpreting current state-of-the-art static DGNs as a discretized representation of non-dissipative ODEs tailored for C-TDGs, mirroring previous approaches in the static case~\cite{GDE, gravina2023antisymmetric}.
To the best of our knowledge, \adgn is the first framework to effectively address the problem of long-range propagation in C-TDGs and the first to bridge the gap between ODEs and C-TDGs.

The key contributions of this work can be summarized as follows: (i) We introduce the problem of long-range propagation (\ie non-dissipativeness) within C-TDGs; (ii) We introduce \adgn, a new deep graph network for learning C-TDGs based on ODEs, which enables stable and non-dissipative propagation to preserve long term dependencies in the information flow, and it does so in a theoretically founded way; (iii) We present novel benchmark datasets specifically designed to assess the ability of DGNs to propagate information over long spatio-temporal distances within C-TDGs; (iv) We conduct extensive experiments to demonstrate the benefits of our method, showing that \adgn not only outperforms state-of-the-art DGNs on synthetic long-range tasks but also outperforms them on several real-world benchmark datasets.

\section{Preliminaries} 
We consider a \textit{dynamic graph} as the tuple $\mathcal{G}(t)=(\mathcal{V}(t), \mathcal{E}(t), \mathbf{X}(t), \mathbf{E}(t))$, defined for any time $t\geq0$, which models a dynamical system of interacting entities (also known as \emph{nodes}) where interactions (or \emph{edges}) evolve over time, \ie they are dynamic in nature. Here, $\mathcal{V}(t)$ is the set of nodes that are present in the graph at time $t$, and $\mathcal{E}(t) \subseteq \{\{u,v, t^-\} \, | \, u,v \in \mathcal{V}(t), t^- < t\}$ defines the edges between them, with $t^-$ the time in which the edge $\{u,v\}$ was created. Matrices $\mathbf{X}(t)\in \mathbb{R}^{|\mathcal{V}(t)|\times d_n}$ and $\mathbf{E}(t)\in \mathbb{R}^{|\mathcal{E}(t)|\times d_e}$ contain node and edge features, respectively. The $u$-th row of $\mathbf{X}(t)$ is denoted as $\mathbf{x}_u$ and represents the features of the single node $u$. Similarly, we indicate 
the feature vector 
of the edge between nodes $u$ and $v$ created at time $t$ as $\mathbf{e}_{uvt}$. Each node $u$ is also associated to state $\mathbf{h}_u(t) \in \mathbf{H}(t)\in \mathbb{R}^{|\mathcal{V}(t)|\times d}$, which encodes node evolution over time $t$.

\begin{figure}[h]
\begin{center}
    \includegraphics[width=0.8\linewidth]{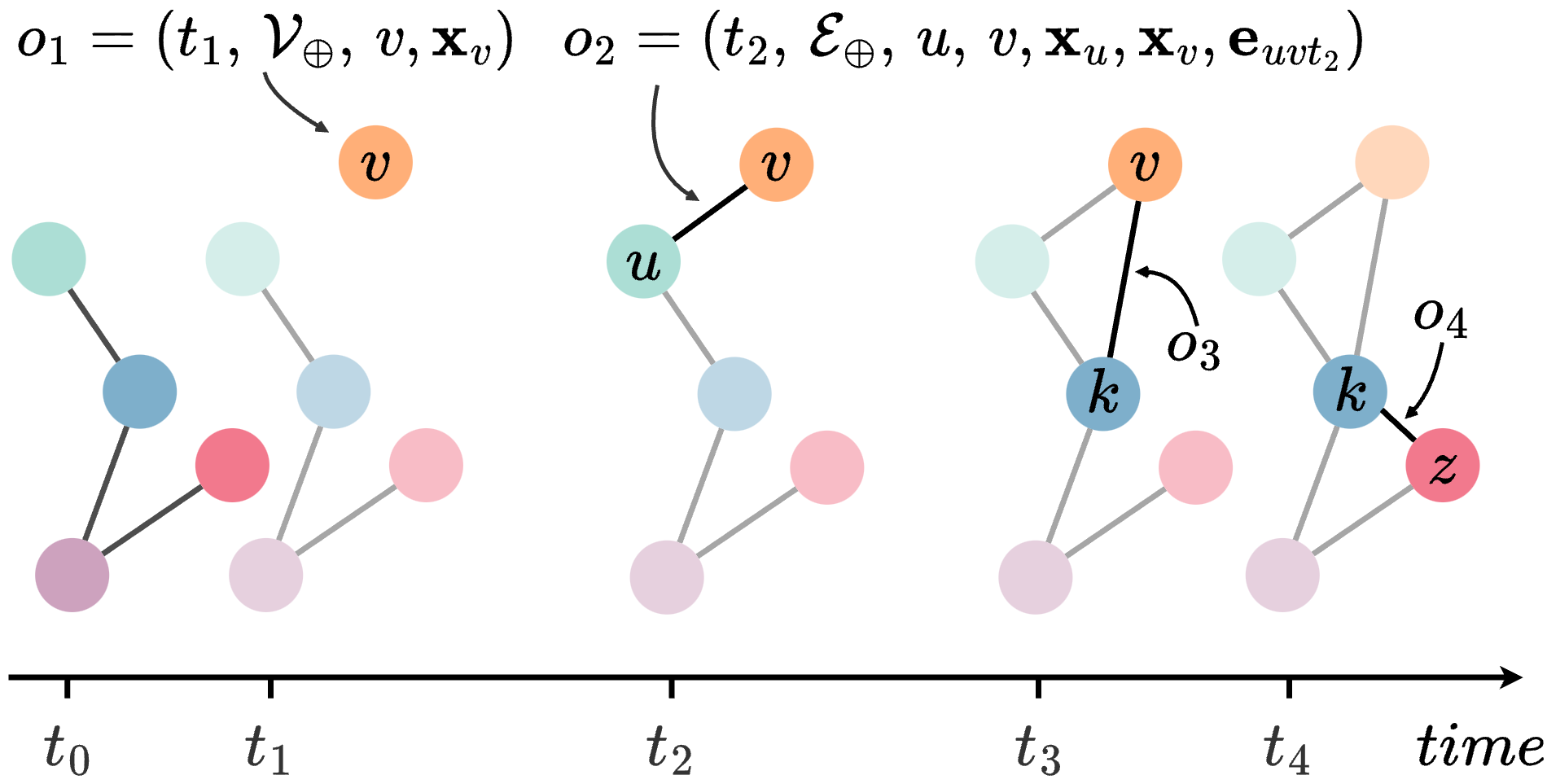}
\caption{The evolution of a Continuous-Time Dynamic Graph through the stream of events up to timestamp $t_4$. 
At each timestamp, the faded portion of the graph corresponds to historical information.}
\label{fig:ctdg}
\end{center}
\vspace{-0.3cm}
\end{figure}

In our setting, the dynamic graph is observed as a \emph{stream of events}, also known as observations, that can appear 
\emph{irregularly} over time. Therefore, the system of interacting entities is not fully observed over time, and it is known as \emph{C-TDG} \cite{ 10.1145/3184558.3191526, dynamicgraph_survey, gravina2023deep}. In this scenario, the dynamic graph can be rewritten as $\mathcal{G} = \{o_t\, | \, t\in[t_0, t_n]\}$, where each event $o_t = (t, \,EventType,\,u,\,v,\,\mathbf{x}_u,\, \mathbf{x}_v,\, \mathbf{e}_{uvt})$ is a tuple containing information regarding the timestamp, the event type, the involved nodes, and their states. The event types can be grouped into three main classes, which are node-wise events (\ie a node is updated or created), interaction events (\ie an edge is created), and deletion events (\ie a node/edge is deleted). In the following, we will refer to $\mathcal{V}_\oplus$ as the event ``node creation", and to $\mathcal{E}_\oplus$ as ``edge addition". We present in Figure~\ref{fig:ctdg} a visual exemplification of a C-TDG.

\section{Continuous-Time Graph Anti-Symmetric Network (CTAN)}\label{sec:ctan}
Learning the dynamics of a C-TDG can be cast as the problem of learning information propagation following newly observed events in the system. This entails learning a diffusion function that updates the state of node $u$ as 
\vspace{-0.25mm}
\begin{equation}\label{eq:propagation}
\mathbf{h}_u(t) = F\left(t, \mathbf{x}_u, \mathbf{h}_u(t), \{\mathbf{h}_v(t)\}, \{\mathbf{e}_{uvt^-}\}\right),
\end{equation}
where $(v,t^-)\in\mathcal{N}^t_u$, and $\mathcal{N}^t_u = \{(v, t^-) \, |\,  \{u,v,t^-\} \in \mathcal{E}(t) \}$ is the temporal neighborhood of a node $u$ at time $t$, which consists of all the historical neighbors of $u$ prior to current time $t$. In the following, we omit the time subscript from the edge feature vector to enhance readability, since it refers to a time in the past in which the edge appeared. 

In recent literature, Eq.~\ref{eq:propagation} is modeled through a dynamical system described by a learnable ordinary differential equation (ODE)~\citep{GDE, grand, pde-gcn, graphcon, gravina2023antisymmetric}. Differently from discrete models, neural-ODE-based approaches learn more effective latent dynamics and have shown the ability to learn complex temporal patterns from irregularly sampled timestamps \citep{NeuralODE, ode-irregular, neuralCDE}, making them more suitable to address C-TDG problems.

In this paper, we leverage non-dissipative ODEs~\citep{StableArchitecture, AntisymmetricRNN, gravina2023antisymmetric} for the processing of C-TDGs. Thus, we propose a framework as a solution to a \textit{stable} and \textit{non-dissipative} ODE over a streamed graph. The main goal of our work is therefore achieving preservation of long-range information between nodes over a stream of events. We do so by first showing how a generic ODE can learn the hidden dynamics of a C-TDG and then by deriving the condition under which the ODE is constrained to the desired behavior.

\myparagraph{Modeling C-TDGs}
First, we define a Cauchy problem in terms of the node-wise ODE defined in time $t\in[0,T]$
\begin{small}
    \begin{equation}
\label{eq:ode}
    \frac{\partial\mathbf{h}_u(t)}{\partial t}=f_\theta\left(t, \mathbf{x}_u, \mathbf{h}_u(t), \{\mathbf{h}_v(t)\}_{v\in\mathcal{N}^t_u}, \{\mathbf{e}_{uv}\}_{v\in\mathcal{N}^t_u}\right)
\end{equation}
\end{small}
and subject to an initial condition $\mathbf{h}_u(0) \in \mathbb{R}^d$. The term $f_\theta$ is a function parametrized by the weights $\theta$ that describes the dynamics of node state. We observe that this framework can naturally deal with events that arrive at an arbitrary time. Indeed, the original Cauchy problem in Eq.~\ref{eq:ode} can be divided into multiple sub-problems, one per each event in the C-TDG.
The $i$-th sub-problem, defined in the interval $t\in[t_s, t_e]$, is responsible for propagating only the information encoded by the $i$-th event. Overall, when a new event $o_i$ happens, the ODE in Eq.~\ref{eq:ode} computes new nodes representations $\mathbf{h}_u^i(t_e)$, starting from the initial configurations $\mathbf{h}_u^i(t_s)$. In other words, $f_\theta$ evolves the state of each node given its initial condition. The top-right of Figure~\ref{fig:method} visually summarizes this concept, showing the nodes evolution given the propagation of an incoming event.
We observe that the knowledge of past events is preserved and propagated in the system thanks to an initial condition that includes not only the current node input states but also the node representations computed in the previous sub-problem, \ie $\mathbf{h}^i_u(t_s)=\psi(\mathbf{h}_u^{i-1}(t_e), \mathbf{x}_u(i))$. We notice that the terminal time $t_e$ (treated as an hyper-parameter) is responsible for determining the extent of information propagation across the graph, since it limits the propagation to a constrained distance from the source. Consequently, smaller values of $t_e$ allow only for localized event propagation, whereas larger values enable the dissemination of information to a broader set of nodes. 

While this approach is applicable to all ODE-based DGNs for C-TDGs, we note that we are the first to introduce this truncated history propagation method in C-TDGs.

\begin{figure*}[ht]
\begin{center}
    \includegraphics[width=0.8\linewidth]{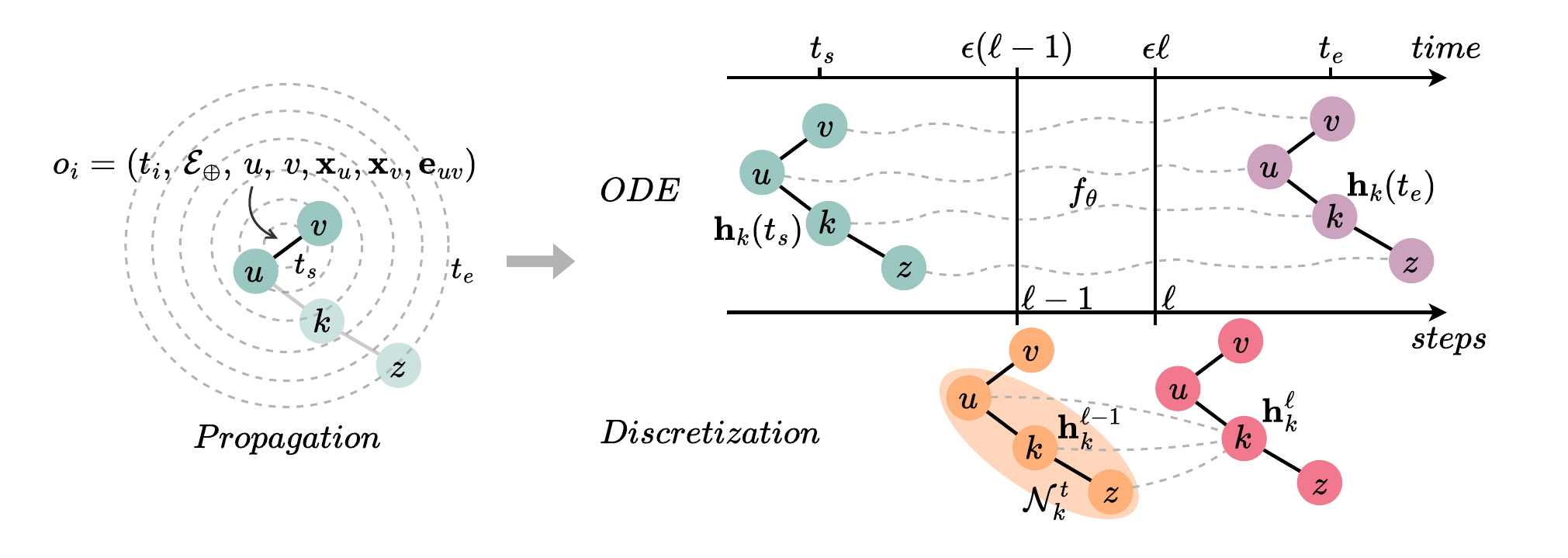}
\end{center}
\vspace{-4pt}
\caption{A high-level overview of the proposed framework illustrated for the $i$-th Cauchy sub-problem. On the left, we depict the propagation of the information of event $o_i$ through the graph. The faded portion of the graph corresponds to historical information, while the rest is the incoming event. On the right, we illustrate the evolution of node states given the propagation of the incoming event. Specifically, the top right shows the evolution as an ODE, $f_\theta$, that computes the node representation for a node $k$,  $\mathbf{h}_k(t)$. Such computation is subject to an initial condition $\mathbf{h}_k(t_s)=\psi(\mathbf{h}_k^{i-1}(t_e), \mathbf{x}_k(i))$ that includes the node representations computed in the previous sub-problem $\mathbf{h}_k^{i-1}(t_e)$ and the current node input state. In the bottom right, the discretized solution of the ODE is computed as iterative steps of the method over a discrete set of points in the time interval $[t_s, t_e]$.}
\label{fig:method}
\vspace{-2pt}
\end{figure*}

\myparagraph{Non-Dissipativeness in C-TDGs}
We now proceed to derive the condition under which the ODE is constrained to a stable and non-dissipative behavior, allowing for the propagation of long-range dependencies in the information flow. Non-Dissipativeness\footnote{The reader is referred to \citep{glendinning_1994, stabilityODE} for an in-depth analysis of dissipative dynamical systems.} in C-TDGs can be dissected into two components: non-dissipativeness over space and over time. 
\begin{definition}[\textit{Non-dissipativeness over space}]\label{def:non-dissip}
\textit{Let $u,v\in\mathcal{V}(t)$ be two nodes of the C-TDG at some time $t$, connected by a path of length $L$. If an event $o_i$ occurs at node $u$, then the information of $o_i$ is propagated from $u$ to $v$, $\forall L\geq0$.}
\end{definition}

We start by instantiating Eq.~\ref{eq:ode} as 
\begin{small}
\begin{equation}
\label{eq:our_ode}
\frac{\partial \mathbf{h}_u(t)}{\partial t} = \sigma \Bigl(\mathbf{W}_t \mathbf{h}_u(t) 
     +\Phi\left(\{\mathbf{h}_v(t),\mathbf{e}_{uv}, t_v^-, t \}_{v\in\mathcal{N}_u^t}\right) \Bigr)
\end{equation}
\end{small}
where $\sigma$ is a monotonically non-decreasing activation function; $\Phi$ is the aggregation function that computes the representation of the neighborhood of the node $u$ considering node states and edge features; $t_v^-$ is the time point of the previous event for node $v$; and  $\mathbf{W}_t\in\mathbb{R}^{d\times d}$. Here and in the following, the bias term is omitted for simplicity. We notice that including $t_v^-$ in $\Phi$ encodes the time elapsed since the previous event involving node $v$. This 
inclusion allows for smooth updates of the node’s current state during the time interval to prevent the staleness problem~\citep{dynamicgraph_survey}.

As discussed in \cite{StableArchitecture}
, a non-dissipative propagation is directly linked to the sensitivity of the solution of the ODE to its initial condition, thus to the stability of the system. Such sensitivity is controlled by the Jacobian's eigenvalues of Eq.~\ref{eq:our_ode}. Given $\lambda_i(\mathbf{J}(t))$ the $i$-th eigenvalue of the Jacobian, when 
$Re(\lambda_i(\mathbf{J}(t))) = 0$ for $i=1,...,d$ the initial condition is effectively propagated into the final node representation, making the system both stable and non-dissipative\footnote{This result holds also when the eigenvalues of the Jacobian are still bounded in a small neighborhood around the imaginary axis~\citep{gravina2023antisymmetric}.}. 

\begin{definition}[\textit{Non-dissipativeness over time}]\label{def:nd-time}
\textit{Let $u\in \mathcal{V}(t)$ be a node in the C-TDG at time $t$ and $o_t$ an event that occurs at node $u$ at time $t$. A DGN for C-TDGs is non-dissipative over time if, regardless of how many more events subsequently occur at $u$, the information of event $o_t$ will persist in $u$'s embedding.}
\end{definition}
In essence, Definition~\ref{def:nd-time} captures the idea that the embedding computed by a DGN for a node in a C-TDG retains the information from a specific event indefinitely, ensuring that the historical context is preserved and not forgotten despite the occurrence of additional events at that node.

To show the property of non-dissipativity over time, we analyze the entire system defined in Eq.~\ref{eq:our_ode} from a temporal perspective. Thus, Eq.~\ref{eq:our_ode} can be reformulated as:
\begin{small}
\begin{align}\label{eq:our_ode_time}
    \frac{\partial \mathbf{h}_u(t)}{\partial t} = \sigma \Bigl(&\mathbf{W}_t \psi(\mathbf{h}_u(t),\mathbf{x}_u(t))  \nonumber\\
    &+  \Phi\left(\{\psi(\mathbf{h}_v(t),\mathbf{x}_v(t)),\mathbf{e}_{uv}, t_v^-, t \}_{v\in\mathcal{N}_u^t}\right) \Bigr)
\end{align}
\end{small}
where $\psi$ is the function that computes the initial condition for the propagation of each event considering the node representations computed in the previous event propagation $\mathbf{h}_u(t)$ and the current node
input state $\mathbf{x}_u(t)$ (as before).

In this context, we can view the system as having $e\Delta t$ steps, where $e$ denotes the number of events and $\Delta t$ represents the propagation time of an event. Furthermore, the input state of the node $\mathbf{x}_u(t)$ is only present upon  occurrence of a new event, meaning that during the propagation of events, $\mathbf{x}_u(t)$ is set to 0. Therefore, its impact on information propagation is confined to the event's specific occurrence and does not affect each step of the propagation process.

The next proposition ensures that when the eigenvalues of the Jacobian matrix of Eq.~\ref{eq:our_ode_time} are placed only on the imaginary axes, then the ODE in Eq.~\ref{eq:our_ode_time} is non-dissipative in both space and time. Thus, we guarantee the preservation of historical context over time and the propagation of event information through the C-TDGs.
\begin{proposition}\label{prop:space_time_non_dissip}
Provided that the weight matrix $\mathbf{W_t}$ is anti-symmetric\footnote{A matrix $\mathbf{M}\in\mathbb{R}^{d\times d}$ is anti-symmetric if $\mathbf{M}^\top = -\mathbf{M}$.} and the aggregation function $\Phi$ does not depend on $\mathbf{h}_u(t)$, then the ODE in Eq.~\ref{eq:our_ode_time} is stable and non-dissipative over space and time if the resulting Jacobian matrix has purely imaginary eigenvalues, \ie $$ Re(\lambda_i(\mathbf{J}(t))) = 0, \forall i=1, ..., d.$$
\end{proposition}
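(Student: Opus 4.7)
The plan is to reduce the proposition to a spectral calculation on a particular Jacobian, then invoke the stability principle recalled just before the statement. First I would compute $\mathbf{J}(t)=\partial f_\theta/\partial \mathbf{h}_u(t)$ for the right-hand side of Eq.~\ref{eq:our_ode_time}. The assumption that $\Phi$ does not depend on $\mathbf{h}_u(t)$ eliminates the aggregation summand, and during a Cauchy sub-problem $\mathbf{x}_u(t)=\mathbf{0}$ so that $\psi(\mathbf{h}_u(t),\mathbf{x}_u(t))$ reduces to the identity in its first argument. By the chain rule,
\begin{equation*}
\mathbf{J}(t)=\mathbf{D}(t)\,\mathbf{W}_t,\qquad \mathbf{D}(t)=\mathrm{diag}\bigl(\sigma'(\mathbf{z}(t))\bigr),
\end{equation*}
where $\mathbf{D}(t)$ is non-negative diagonal because $\sigma$ is monotonically non-decreasing.

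The central algebraic step is to show that $\mathbf{D}(t)\,\mathbf{W}_t$ has purely imaginary spectrum. On the set where $\mathbf{D}(t)$ is strictly positive, the similarity $\mathbf{D}^{1/2}\mathbf{W}_t\mathbf{D}^{1/2}$ is itself anti-symmetric since
\begin{equation*}
\bigl(\mathbf{D}^{1/2}\mathbf{W}_t\mathbf{D}^{1/2}\bigr)^{\top}=\mathbf{D}^{1/2}\mathbf{W}_t^{\top}\mathbf{D}^{1/2}=-\mathbf{D}^{1/2}\mathbf{W}_t\mathbf{D}^{1/2},
\end{equation*}
and therefore has purely imaginary eigenvalues; $\mathbf{J}(t)$ inherits the same spectrum through similarity. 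Coordinates on which $\sigma'$ vanishes can be handled by continuity of the characteristic polynomial in $\mathbf{D}$, or equivalently by restricting attention to strictly increasing activations. This yields $\mathrm{Re}(\lambda_i(\mathbf{J}(t)))=0$ for every $i$.

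With the spectral condition established, I would draw the two conclusions separately. \emph{Non-dissipativeness over space} (Definition~\ref{def:non-dissip}) follows from the classical stability argument for ODEs whose Jacobian eigenvalues lie on the imaginary axis (\cite{StableArchitecture, AntisymmetricRNN, gravina2023antisymmetric}): the flow on $[t_s,t_e]$ neither contracts nor expands perturbations of the initial condition, so the signal introduced by an event at $u$ is relayed by successive evaluations of the aggregation $\Phi$ to arbitrarily distant nodes along any path of length $L$ without geometric decay. \emph{Non-dissipativeness over time} (Definition~\ref{def:nd-time}) is then obtained by chaining the sub-problem flows across events: on the concatenated time axis the dynamics is a composition of these norm-preserving flows, interleaved with input-injections through $\psi$, and because $\psi$ is chosen to be invertible in its first argument the state contribution of any past event cannot be erased when a new event arrives.

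The main obstacle I anticipate is twofold. Algebraically, the degenerate case in which $\mathbf{D}(t)$ is only positive \emph{semi}-definite breaks the clean similarity transform and forces a limiting argument on the characteristic polynomial. Conceptually, promoting the spectral property into genuine non-dissipativeness \emph{over time} in the presence of event-driven jumps requires careful bookkeeping of the role of $\psi$ at event times; in particular, the proof must rely on the observation that $\mathbf{x}_u(t)$ vanishes between events, which is precisely what keeps the Jacobian computation above valid throughout each sub-interval and what prevents historical information from being overwritten at event arrivals.
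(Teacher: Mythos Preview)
Your proposal is correct and structurally matches what the paper does: factor the Jacobian as a non-negative diagonal matrix times the anti-symmetric $\mathbf{W}_t$, conclude purely imaginary spectrum, and invoke the cited stability/non-dissipativity criterion. The paper gives essentially no self-contained proof for this proposition---it defers to \cite{gravina2023antisymmetric} with the instruction to ``substitute the Jacobian computed w.r.t.\ space with a Jacobian computed w.r.t.\ time''---and the explicit computation appears only in Appendix~\ref{app:temporal_analysis} for concrete choices of $\psi$.

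The one substantive difference is the algebraic step. The paper (Appendix~\ref{app:temporal_analysis}) takes an eigenpair $(\lambda,\mathbf{v})$ of $\mathbf{A}\mathbf{W}_t$, multiplies by $\mathbf{A}^{-1}$ and then $\mathbf{v}^*$ to get $\mathbf{v}^*\mathbf{W}_t\mathbf{v}=\lambda(\mathbf{v}^*\mathbf{A}^{-1}\mathbf{v})$, and reads off that the left side is imaginary while the bracketed factor is real. You instead use the similarity $\mathbf{D}^{-1/2}(\mathbf{D}\mathbf{W}_t)\mathbf{D}^{1/2}=\mathbf{D}^{1/2}\mathbf{W}_t\mathbf{D}^{1/2}$, which is manifestly anti-symmetric. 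Both are standard and both silently require the diagonal factor to be invertible; your version is slightly more careful in that you flag the semidefinite case and close it by continuity of the characteristic polynomial, whereas the paper simply writes $\mathbf{A}^{-1}$. Your event-chaining discussion for temporal non-dissipativity is also more explicit than anything the paper provides. One minor imprecision: $\psi(\mathbf{h}_u,\mathbf{0})$ need not literally reduce to the identity (e.g.\ the $\tanh$-concatenation case in Appendix~\ref{app:temporal_analysis}); what you actually need, and what still holds there, is that $\partial\psi/\partial\mathbf{h}_u$ is diagonal, so the Jacobian remains a diagonal matrix times $\mathbf{W}_t$ and your argument goes through unchanged.
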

For the proof, we refer the reader to \citep{gravina2023antisymmetric}(Appendix B) and substitute the Jacobian computed w.r.t. space with a Jacobian computed w.r.t. time.

By constraining weight matrix $\mathbf{W_t}$ to be anti-symmetric we obtain that the ODE in Eq.~\ref{eq:our_ode} is not-dissipative in both space and time, guaranteeing the preservation of historical node context over time while propagating event information ``spatially" through the C-TDGs. We provide a more in-depth analysis of non-dissipativeness over time in Appendix~\ref{app:temporal_analysis} where we show that varying the formulation of $\psi$ can yield to diverse behaviors.

\myparagraph{Numerical discretization} 
Now that we have defined the conditions under which the ODE in Eq.~\ref{eq:our_ode} is stable and non-dissipative, \ie it can propagate long-range dependencies between nodes in the C-TDG, we observe that computing the analytical solution of an ODE is usually infeasible. It is common practice to rely on a discretization method to compute an approximate solution by multiple applications of the method over a discrete set of points in the time interval $[t_s, t_e]$. This process is visually summarized at the bottom of Figure~\ref{fig:method}. We employ the \textit{forward Euler's method} to discretize Eq.~\ref{eq:our_ode} for the $i$-th Cauchy sub-problem, yielding the following node state update equation for the node $u$ at step $\ell$:

\vspace{-0.33cm}
\begin{small}
\begin{multline}\label{eq:discretization}
    \mathbf{h}^{\ell}_u = \mathbf{h}^{\ell-1}_u +\epsilon \sigma \Bigl((\mathbf{W}_\ell-\mathbf{W}_\ell^\top-\gamma\mathbf{I})\mathbf{h}^{\ell-1}_u \\
    +\Phi\left(\{\mathbf{h}_v(t),\mathbf{e}_{uv}, t_v^-, t \}_{v\in\mathcal{N}_u^t}\right) \Bigr),
\end{multline}
\end{small}
with $\epsilon>0$ being the discretization step size.
We notice that the anti-symmetric weight matrix $(\mathbf{W}_\ell-\mathbf{W}_\ell^\top)$ is subtracted by the term $\gamma\mathbf{I}$ to preserve the stability of the forward Euler's method, see Appendix~\ref{app:euler_stability} for a more in-depth analysis. We refer to $\mathbf{I}$ as the identity matrix and $\gamma$ to a hyper-parameter that regulates the stability of the discretized diffusion. We note that the resulting neural architecture contains as many layers as the discretization steps, \ie $L = t_e/\epsilon$.


\myparagraph{Truncated non-dissipative propagation}
As previously discussed
, the number of iterations in the discretization (i.e., the terminal time $t_e$) plays a crucial role in the propagation. Specifically, few iterations result in a localized event propagation. Consequently, the non-dissipative event propagation does not reach each node in the graph, causing a \textit{truncated} non-dissipative propagation. This method allows scaling the radius of propagation of information depending on the number of discretization steps, thus allowing for a scalable long-range propagation in C-TDGs. Crucially, we notice that, even with few discretization steps, it is still possible to propagate information from a node $u$ to $z$ (if a path of length $P$ connects $u$ and $z$). As an example, consider the situation depicted in the left segment of Figure~\ref{fig:method}, where nodes $u$ and $v$ establish a connection at some time $t$, and our objective is to transmit this information to node $z$. In this scenario, we assume $L=1$, thus the propagation is truncated before $z$. Upon the arrival of the 
event at time $t$, this is initially relayed (due to the constraint of $L=1$) to node $k$, which then captures and retains this information. If a future event at time $t+\tau$ involving node $k$ occurs, its state is propagated, ultimately reaching node $z$. Consequently, the information originating from node $u$ successfully traverses the structure to reach node $z$. 
More formally, if it exists a sequence of (at least $P/L$) successive events, such that each future $i$-th event is propagated to an intermediate node at distance $iP/L$ from $u$, then $u$ is able to directly share its information with $z$. 
Therefore, even with a limited number of discretization steps, the non-dissipative behavior enables scaling the single event propagation to cover a larger portion of the C-TDG.
We also notice that if the number of iterations is at least equal to the longest shortest path in the C-TDG, then each event is always propagated throughout the whole graph. 

\myparagraph{The CTAN framework}
We name the framework defined above \fulladgn (\adgn). Note that $\Phi$ in Eq.~\ref{eq:our_ode} and \ref{eq:discretization} can be any function that aggregates nodes and edges states. Then, \adgn can leverage the aggregation function that is more adequate for the specific task. As an exemplification of this, in Section~\ref{sec:experiments} we leverage the aggregation scheme based on the one proposed by \cite{TrasformerConv}:

\begin{small}
\begin{multline}\label{eq:aggregation}
    \Phi\left(\{\mathbf{h}_v(t),\mathbf{e}_{uv}, t_v^-, t \}_{v\in\mathcal{N}_u^t}\right) =\\
=\sum_{v \in \mathcal{N}_u^t \cup \{ u \}} \alpha_{uv} \left(\mathbf{V}_n\mathbf{h}_{v}^{\ell-1} + \mathbf{V}_e\hat{\mathbf{e}}_{uv}\right)
\end{multline}
\end{small}
where $\hat{\mathbf{e}}_{uv} = \mathbf{e}_{uv} \| \left(\mathbf{V}(t-t_v^-)\right)$ is the new edge representation computed as the concatenation between the original edge attributes and a learned embedding of the elapsed time from the previous neighbor interaction, and $\alpha_{uv}=\textrm{softmax} \left(\frac{\mathbf{q}^{\top}\mathbf{K}}{\sqrt{d}} \right)$ is the attention coefficient 
with $d$ the hidden size of each head, $\mathbf{q} = \mathbf{V}_q\mathbf{h}_u^{\ell-1}$, and $\mathbf{K} = \mathbf{V}_k\mathbf{h}_v^{\ell-1} + \mathbf{V}_e\hat{\mathbf{e}}_{uv}$. 

Despite \adgn being designed from the general perspective of layer-dependent weights, it can be used with weight sharing between layers (as in Section~\ref{sec:experiments}).

\section{Experiments}
\label{sec:experiments}

To evaluate the performance of \adgn, we design two novel temporal tasks which require propagation of long-range information by design, Section~\ref{sec:long_range} and Section~\ref{sec:pascal}.
Afterwards, we assess the performance of the proposed \adgn approach on classical benchmarks for C-TDGs in Section~\ref{sec:exp_jodie}.
We complement these classical benchmarks with a larger evaluation on the TGB framework~\cite{huang2023temporal} in Appendix~\ref{app:tgb_results}, showcasing the model capabilities in diverse settings, covering evaluations with (i) improved negative sampling techniques and (ii) transductive and inductive settings.
In Appendix~\ref{app:scalability} we conduct an investigation on the scalability property of CTAN.
In Appendix~\ref{app:datasets}, we present comprehensive descriptions and statistics of the  datasets.
We release the long-range benchmarks and the code implementing our methodology and reproducing our analysis at \url{https://github.com/gravins/non-dissipative-propagation-CTDGs}.

\myparagraph{Shared Experimental Settings}
In the following experiments, we consider weight sharing of \adgn parameters across the neural layers.  
We compare \adgn against four popular dynamic graph network methods (\ie DyRep~\citep{dyrep}, JODIE~\citep{jodie}, TGAT~\citep{TGAT}, and TGN~\citep{tgn_rossi2020}) and include recent methods GraphMixer~\cite{cong2023we} and DyGFormer~\cite{yu2023towards} for evaluation in long-range tasks. 
To ensure fair comparison and efficient implementation, we implement these methods in our framework. 
With the same purpose, we reuse the graph convolution operators in the original literature, considering for all methods the aggregation function defined in Eq.~\ref{eq:aggregation}.
We designed each model as a combination of two components: (i) the DGN (\ie \adgn or a baseline) which is responsible to compute the node representations; (ii) the readout that maps node embeddings into the output space.
The readout is a 2-layer MLP, used in all models with the same architecture. 
We perform hyper-parameter tuning via grid search, considering a fixed parameter budget based on the number of graph convolutional layers (GCLs). Specifically, for the maximum number of GCL in the grid, we select the embedding dimension so that the total number of parameters matches the budget; such embedding dimension is used across every other configuration.
We report more detailed information on each task in their respective subsections. Detailed information about hyper-parameter grids and training of models are in Appendix~\ref{app:grids}.
While we do not directly investigate the optimal terminal time $t_e$ within the hyper-parameter space, we implicitly address this aspect through the choice of the step size $\epsilon$ and the maximum number of layers $L$, as they jointly determine the terminal time, \ie $t_e=\epsilon L$.

\subsection{Long Range Tasks}\label{sec:exp_non-diss}
%
%
Here, we introduce two temporal tasks which contain long-range interaction (LRI). The first is a \textit{Sequence Classification} task on path graphs~\citep{Bondy1976} and the second an extension to the temporal domain of the classification task \textit{PascalVOC-SP} introduced in the Long Range Graph Benchmark~\citep{dwivedi2022long}.

\subsubsection{Sequence Classification on Temporal Path Graph}\label{sec:long_range}

\myparagraph{Setup}
Inspired by the tasks in~\cite{AntisymmetricRNN}, we consider a sequence classification task requiring long-range information on a temporal interpretation of a path graph~\citep{Bondy1976}.
Here, the nodes
of the path graph appear sequentially over time from first to last, \ie each event in the C-TDG connects each node to the previous one in the path graph (see Appendix~\ref{app:datasets} for a reference to our dataset).
The task objective is to predict the feature observed at the source node in the first event after having traversed the entire temporal path graph, \ie after reaching the last event in the stream.
After the model processes the last event in the graph, the output prediction for the whole graph is computed by a readout that takes as input the updated embedding of the destination node of the last event in the C-TDG.
The task requires models to propagate the information seen at the first node through the entire sequence. Models that exhibit smoothing or dissipative behavior will fail to transmit relevant information to the destination node for longer sequences, resulting in poor performance.

When creating the dataset, we set the feature of the first source node to be either 1 or -1, and we use uniformly random sampled features for intermediate nodes and edges to ensure the only task-relevant information is on the earliest node.
We forward events one at a time to update neighboring nodes representations  (\ie batch size is 1).
We considered graphs of different sizes, from length 3 to 20, to test how long information is propagated, i.e., longer graphs force models to propagate information for longer.
During training, we optimize the binary cross-entropy loss over two classes corresponding to the two possible signals (1 or -1) placed on the initial node.
Each experimental run is repeated 10 times for different weight initializations; the grid is computed considering a budget of $\sim$20k trainable parameters.
The best performing configuration is chosen based on the validation loss.
Appendix~\ref{app:grids} reports more training details and the grid of hyper-parameters.

\myparagraph{Results}
The test accuracy on the sequence classification task is in Table~\ref{tab:results-classif-path-graph_small} (comprehensive results are reported in Appendix~\ref{app:complete_results}).
\adgn exhibits exceptional performance in comparison to reference state-of-the-art methods.
This result highlights the capability of our method to propagate information seen on the first node throughout long paths. 
Meanwhile, several baseline models struggle in solving such a task because the information is lost through the time-steps:
in practice, informative gradients vanish over time.
Note that, memory-less methods such as TGAT, GraphMixer and DyGformer can not effectively propagate information past the number of layers (\ie hops) used in the neighbor aggregation. Note that while the latter two methods are designed for 1-hop aggregation, TGAT allows for variable number of GCLs aggregations, which we test up to 5. We notice TGAT can solve the task at distance 5, but fails for longer graphs.
JODIE and TGN are memory-based methods, which grants them the ability to solve tasks for longer distances, but being RNN-based methods inherently struggle to maintain long-term dependencies~\citep{279181,AntisymmetricRNN}. TGN fails at distance 7, while JODIE at distance 15.
\adgn on the other hand, better propagates information for longer distances, solving the task even at length 20.
\vspace{-0.25cm}
\begin{table}[h]
\centering
\caption{Results of the sequence classification on path graph long-range task, for increasing graph length~$n$. The performance metric is the mean test set accuracy score, averaged over 10 different random weights initializations for each model configuration. 
\label{tab:results-classif-path-graph_small}\\}

\footnotesize

\resizebox{0.48\textwidth}{!}
{
\setlength{\tabcolsep}{2pt} 
\begin{tabular}{lcccc} 
\hline\toprule 
     & $n$=3 
     & $n$=9 
     & $n$=15 
     & $n$=20 \\
\midrule
DyGFormer    &      \one{100.0$_{\pm0.0}$} 
            &      53.02$_{\pm6.06}$ 
            &       42.80$_{\pm16.25}$ 
            &      42.79$_{\pm19.62}$  \\
DyRep        &       \one{100.0$_{\pm0.0}$} 
            &      47.93$_{\pm2.73}$ 
            &        48.60$_{\pm2.48}$ 
            &       50.47$_{\pm2.88}$ \\
GraphMixer   &       \one{100.0$_{\pm0.0}$} 
            &       52.80$_{\pm5.56}$ 
            &       52.49$_{\pm5.36}$ 
            &        52.04$_{\pm8.20}$ \\
JODIE        &       \one{100.0$_{\pm0.0}$} 
            &       \one{100.0$_{\pm0.0}$} 
            &       60.0$_{\pm14.91}$ 
            &       50.87$_{\pm2.46}$ \\
TGAT         &       \one{100.0$_{\pm0.0}$} 
            &      47.87$_{\pm2.72}$ 
            &       50.53$_{\pm2.15}$ 
            &       49.07$_{\pm1.55}$ \\
TGN          &       \one{100.0$_{\pm0.0}$} 
            &      48.13$_{\pm1.63}$ 
            &       48.67$_{\pm2.76}$ 
            &       50.13$_{\pm2.17}$ \vspace{3pt}\\
Our    &       \one{100.0$_{\pm0.0}$} 
        &      99.93$_{\pm0.21}$ 
        &       \one{93.47$_{\pm8.78}$} 
        &      \one{88.93$_{\pm12.06}$} \\
\bottomrule\hline 
\end{tabular}
}
\end{table}

\subsubsection{Classification on Temporal Pascal-VOC}\label{sec:pascal}
\myparagraph{Setup}
We consider edge classification on a temporal interpretation of the \texttt{PascalVOC-SP} dataset, which has been previously employed by \cite{dwivedi2022long} as a benchmark to show the efficacy of capturing LRI in static graphs. 
Here, we adapt the task to the C-TDG domain: we forward edges one at a time and predict the class of the destination node.
We generate temporal graphs starting from the dataset of \texttt{rag-boundary} graphs extracted from Pascal VOC 2011 provided in~\cite{dwivedi2022long} (more details are provided in Appendix~\ref{app:datasets}).
We consider two degrees of SLIC superpixels compactness, \ie 10 and 30. Larger compactness means more patches, with less information included in each patch and more to be propagated.

During training, we optimize for the F1-score as in \cite{dwivedi2022long}. 
To benchmark the ability of models to propagate information through the graph, we test model performance for an increasing number of GCLs. 
Fewer GCLs require models to store and transmit relevant information along node embeddings rather than relying on effectively aggregating information from increasingly larger neighborhoods.
Each experimental run is repeated 5 times for multiple weight initializations.
The hyper-parameter grid is computed considering a budget of trainable parameters per
model equal to $\sim$40k.
Appendix~\ref{app:grids} provides further training and model selection details. 

\myparagraph{Results}
Table~\ref{tab:pascal_results} reports the average F1-score on the temporal PascalVOC-SP task. 
Note that DyRep, JODIE, GraphMixer and DyGFormer, in their original definition, do not support a variable number of GCLs, hence the results of such models are presented in the table under ``1 GCL" for clarity.
\adgn largely outperforms reference methods
. 
We observe that for SLIC compactness equal to 30, \adgn achieves a 65\% and 16\% improvement against the second best performing model (\ie TGAT), for one and three GCLs, respectively.
Interestingly, TGAT almost matches the performance of \adgn when considering five GCLs.
\begin{table*}[htb]
\centering
\caption{Results of the classification on the Temporal PascalVOC task, for increasing number of GCLs. The performance metric is the mean test set F1-score, averaged over 5 different random weights initializations for each model configuration. 
\\\label{tab:pascal_results}}

\footnotesize
\setlength{\tabcolsep}{6pt} 
\resizebox{0.75\textwidth}{!}
{
\begin{tabular}{l|ccc|ccc}
\hline\toprule
 & \multicolumn{3}{c|}{\textbf{Temporal Pascal VOC (sc=10)}} & \multicolumn{3}{c}{\textbf{Temporal Pascal VOC (sc=30)}}\\
no. GCLs & 1 & 3 & 5 & 1 & 3 & 5\\
\midrule

DyGFormer          & \one{8.45$_{\pm0.13}$} & $-$ & $-$ & 8.07$_{\pm0.27}$ & $-$  &   $-$  \\
DyRep        & 5.29$_{\pm0.47}$ & $-$ & $-$ & 5.23$_{\pm0.11}$ & $-$  &  $-$   \\
GraphMixer          & 6.60$_{\pm0.11}$ & $-$ & $-$ & 5.88$_{\pm0.08}$ & $-$  &   $-$  \\
JODIE        & 6.33$_{\pm0.41}$ & $-$ & $-$ & 5.76$_{\pm0.35}$ & $-$ & $-$   \\
TGAT         & 5.39$_{\pm0.19}$ & 6.53$_{\pm0.58}$ & 8.23$_{\pm0.73}$ & 6.04$_{\pm0.26}$ & 8.79$_{\pm0.29}$  &  10.38$_{\pm0.7}$   \\
TGN          & 6.04$_{\pm0.27}$ & 6.55$_{\pm0.46}$ &  7.51$_{\pm0.80}$ & 5.59$_{\pm0.24}$ & 7.26$_{\pm0.82}$  &   7.90$_{\pm1.31}$   \vspace{3pt}\\

Our & 7.89$_{\pm0.33}$ & \one{8.53$_{\pm1.06}$} & \one{8.88$_{\pm0.98}$} & \one{9.98$_{\pm0.33}$} & \one{10.16$_{\pm0.52}$} & \one{10.41$_{\pm0.52}$} \\
\bottomrule\hline

\end{tabular}
}
\end{table*}
This is in line with the excellent results of computationally expensive Transformers-based models in the static case~\citep{dwivedi2022long}, corroborating the advantages of self-attention blocks in modeling long-range dependencies between far away nodes.
This result also suggests that the majority of the relevant information necessary to solve the temporal Pascal VOC task may lie within neighborhoods five hops away.
We note that at SLIC compactness 10, DyGFormer benefits from the shorter long-range propagation (when sc=10 the graph contains fewer patches, hence fewer nodes and spatially closer relevant information compared to sc=30), and from its deeper architecture compared to \adgn's single-layer design, when considering the same number of spatial hops. In fact, in this setting DyGFormer contains two transformer blocks, while \adgn does not. However, we observe that by including multiple layers of \adgn (i.e., $\text{no.GCLs}>1$), our method effectively propagates information and outperforms DyGFormer even in the sc=10 task.
Nevertheless, the results indicate how \adgn is capable of propagating relevant information across the time-steps to achieve accurate predictions, even when the model is only allowed to extract information from limited, very local neighborhoods.

\subsection{Future Link Prediction Task}\label{sec:exp_jodie}
\myparagraph{Setup} 
For the C-TDG benchmarks we consider four well-known datasets proposed by \cite{jodie}, Wikipedia, Reddit, LastFM, and MOOC, to assess the model performance in real-world setting, with the task of future link prediction~\cite{jodie}.
We perform hyper-parameter tuning via grid search by optimizing the area under the roc curve (AUC). 
Results for the best configuration are provided as average on 5 random weight initializations.
To give models a fair setting for comparison, the grid is computed considering a budget of $\sim$140k trainable parameters per model and the neighbor sampler size is set to 5.
Appendix~\ref{app:grids} provides additional experimental details.

\myparagraph{Results}
Table~\ref{tab:fair_results} reports the average test AUC on the C-TDG benchmarks. \adgn shows remarkable performance, ranking first across datasets.
Our method achieves a score that on average is 4.7\% better than other baselines. 
This finding shows the importance of a non-dissipative behavior of the method even on real-world tasks, since more information need to be retained and propagated from the past to improve the final performance.
Our results demonstrate that \adgn is able to better capture and exploit such information.
Nevertheless, note that not all real-world datasets inherently present long-range dependencies. To evaluate how \adgn fares against state-of-the-art methods on several datasets, we complement this analysis with an evaluation on the TGB Benchmark~\cite{huang2023temporal}, see Appendix~\ref{app:tgb_results}. In this setting, \adgn characterizes by the best performing behaviour when considering the combination of TGB datasets. 
\begin{table}[ht]
\centering
\caption{Results of the future link prediction task. We report the mean test set AUC and std in percent averaged over 5 random weight initializations. 
\\\label{tab:fair_results}}
\footnotesize
\setlength{\tabcolsep}{5pt} 
\begin{tabular}{lcccc}
\hline\toprule
& \textbf{Wikipedia} & \textbf{Reddit} & \textbf{LastFM} & \textbf{MOOC}\\\midrule 


DyRep & 88.64$_{\pm0.15}$ 
      & 97.51$_{\pm0.10}$ 
      & 77.89$_{\pm1.39}$ 
      & 81.87$_{\pm2.47}$ 
      \\
JODIE & 94.68$_{\pm1.05}$ 
      & 96.34$_{\pm0.83}$ 
      & 69.76$_{\pm2.74}$ 
      & 81.90$_{\pm9.03}$ 
      \\
TGAT & 94.91$_{\pm0.25}$ 
     & 98.18$_{\pm0.05}$ 
     & 81.53$_{\pm0.34}$ 
     & 87.61$_{\pm0.15}$ 
     \\
TGN & 95.60$_{\pm0.18}$ 
    & 98.23$_{\pm0.10}$ 
    & 79.18$_{\pm0.79}$ 
    & 90.74$_{\pm0.99}$
\vspace{2pt}\\
Our & \one{97.55$_{\pm0.09}$} 
    & \one{98.61$_{\pm0.04}$} 
    & \one{83.81$_{\pm0.92}$} 
    & \one{92.47$_{\pm0.78}$}\\ 
\bottomrule\hline 
\end{tabular}

\end{table}


\section{Related Work}

\myparagraph{Deep Graph Network for C-TDGs}
Nowadays, most of the DGNs tailored for learning C-TDGs can be generalized within the Temporal Graph Network (TGN) framework \citep{tgn_rossi2020}.
This architecture comprises three main modules: a message module, which is responsible for computing a message that encodes the incoming event; a memory module, which stores the node histories; and a graph propagation module, which aggregates information from the local neighborhood to produce the final node representation.
Usually, the memory module is implemented as a Recurrent Neural Network (RNN) and the graph propagation module as a DGN for the processing of static graphs.
Many state-of-the-art architectures~\citep{jodie, dyrep, TGAT, streamgnn, pint} fit this framework, with later methods outperforming earlier ones thanks to advances in the local message passing part or even in the encoding of positional features.
Two recent methods~\cite{cong2023we, yu2023towards} focus on modeling long-range (time) dependencies by including longer node histories in the context while not relying on memory modules.
While recent methods often provide improved results, none of them explicitly models long-range \textit{temporal and spatial } dependencies between nodes or events in the C-TDG. 
As increasingly evidenced both in sequence-model architectures~\citep{AntisymmetricRNN}, and in the static graph case~\citep{dwivedi2022long}, propagating information across various time steps is extremely beneficial for learning
.

\adgn, instead, provably enables effective long-range propagation by design.
Note that our approach does not require the co-existence of memory \textit{and} graph propagation module, as in the TGN framework. \adgn stores all necessary information within the node embeddings themselves as computed by the graph convolution, while achieving non-dissipative propagation by design.
This makes \adgn more lightweight. 
Lastly, as TGN allows for different graph propagation modules, the general formulation of the aggregation function $\Phi$ in Eq.~\ref{eq:discretization} allows 
extending state-of-the-art DGNs for static graphs to the domain of C-TDGs through the lens of non-dissipative and stable ODEs.   

\myparagraph{Continuous Dynamic Models}
Neural Differential Equations have emerged as a class of neural networks suitable for learning continuous dynamics of systems.  \cite{NeuralODE} and \cite{AntisymmetricRNN} parameterize the continuous dynamic of RNNs through an ordinary differential equation. Similarly, \cite{EulerSN} draws a connection with Reservoir Computing. Despite the similarity with RNNs, such architectures have shown the ability to naturally incorporate data that arrive at arbitrary times \citep{NeuralODE, ode-irregular}. 
Inspired by the NeuralODE approach, GDE~\citep{GDE} links DGNs for static graphs with ODEs. In this scenario, the inter-layer dynamic of DGN's node representation is designed as a continuous information processing system defined by an ODE, which, starting from the input configuration of the nodes’ states, computes the final node representations. In the static graph domain, ODE-based architectures have been proposed with different aims, such as reducing the computational complexity of message passing~\citep{sgc,dgc}, or mitigating the over-smoothing phenomena~\citep{pde-gcn, graphcon}.

\citet{gravina2023antisymmetric} proposes A-DGN, an ODE-based model achieving non-dissipative propagation through static graphs, i.e., in the time-unaware spatial domain. We note that time-aware nodes and edges combined with possibly irregularly sampled repetitive edges between the same pair of nodes natively render A-DGN (as well as other methods designed for static graphs) inapplicable to C-TDGs. 
Less trivially, non-dissipative propagation in C-TDGs cannot be achieved through mere non-dissipative propagation through space. On the contrary, non-dissipative propagation of information through time is a property unique to DGNs designed for C-TDG, necessary for their overall non-dissipativeness
. 

To the best of our knowledge, we are the first to propose an ODE-based architecture suitable for C-TDGs that can effectively propagate long-range information between nodes.

\section{Conclusion}
We presented \fulladgn (\adgn), a new framework based on stable and non-dissipative ODEs for learning long-range interactions in Continuous-Time Dynamic Graphs (C-TDGs). Differently from previous approaches, \adgn's formulation allows scaling the radius of effective propagation of information in C-TDGs (\ie allowing for a scalable long-range propagation in C-TDGs) and reimagines state-of-the-art static DGNs as a discretization of non-dissipative ODEs for C-TDGs. To the best of our knowledge, \adgn is the first framework to address the long-range propagation problem in C-TDGs, while bridging the gap between ODEs and C-TDGs.

Our experimental investigation reveals, at first, that when it comes to capturing long-range dependencies in a task, our framework significantly surpasses state-of-the-art DGNs for C-TDGs. Our experiments indicate that \adgn is capable of propagating relevant information incrementally across time to achieve accurate predictions, even when the model is only allowed to extract information from very local neighborhoods, \ie by using only a single or few layers. Thus, \adgn enables scaling the extent of information propagation in C-TDG data structures without increasing the number of layers nor incurring in dissipative behaviors.
Moreover, our results indicate that \adgn is effective across various graph benchmarks in both real and synthetic scenarios. In essence, \adgn showcased its ability to explore long-range dependencies (even with limited resources), suggesting its potential in mitigating over-squashing in C-TDGs.

We believe that \adgn lays down the basis for further investigations of the problem of over-squashing and long-range interaction learning in the C-TDG domain. Looking ahead to future developments, we plan to extend this study to explore alternative architectures resulting from different discretization methods, such as adaptive multi-step schemes~\citep{stableEuler2}. Additionally, we aim to assess the framework's impact in the realm of efficient neural networks, such as in Reservoir Computing~\citep{reservoir_comp_book}.

\section*{Impact Statement}
This paper aims to contribute to the field of Machine Learning, specifically focusing on advancing Deep Graph Networks (DGNs) in the Continuous-Time Dynamic Graph (C-TDG) setting. The research presented herein has the potential to positively impact the ongoing exploration and applications of DGNs designed for C-TDGs. As far as we are aware, our work does not raise any ethical issues.

\section*{Acknowledgments}
The authors would like to thank Michele Russo, Huawei Technologies, Munich, Germany, and Jakub Reha, University of Amsterdam, Amsterdam, Netherlands, for the insightful discussions throughout the development of this work. The work has been partially supported by EU-EIC EMERGE (Grant No. 101070918).

\bibliographystyle{icml2024}

\newpage
\appendix
\onecolumn

\section{Non-dissipativeness Over Time}\label{app:temporal_analysis}
We note that the non-dissipative behavior of the system in Eq.~\ref{eq:our_ode_time} is contingent on the specific definition of the function $\psi$. Varying the formulation of $\psi$ can yield to diverse behaviors, significantly impacting the system's ability to either preserve or dissipate information over time.

\begin{proposition}\label{prop:non-diss_space_and_time}
Provided that the aggregation function $\Phi$ does not depend on $\mathbf{h}_u(t)$, the Jacobian matrix resulting from the ODE in Eq.~\ref{eq:our_ode_time} has purely imaginary eigenvalues, \ie $Re(\lambda_i(\mathbf{J}(t))) = 0, \forall i=1, ..., d$ if the function $\psi$ is implemented as one of the following functions:
\begin{itemize}
    \item addition, \ie $\psi=\mathbf{h}_u(t)+\mathbf{x}_u(t)$;
    \item concatenation, \ie $\psi=\mathbf{h}_u(t)\|\mathbf{x}_u(t)$;
    \item composition of \textit{tanh} and concatenation, i.e., $\psi=tanh(\mathbf{h}_u(t)\|\mathbf{x}_u(t))$.
\end{itemize}
\end{proposition}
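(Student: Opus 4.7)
The plan is to differentiate the right-hand side of Eq.~\ref{eq:our_ode_time} with respect to the state $\mathbf{h}_u(t)$ for each of the three candidate $\psi$'s and then invoke a similarity argument to place all eigenvalues of the resulting Jacobian on the imaginary axis. The working hypothesis, carried over from Proposition~\ref{prop:space_time_non_dissip}, is that the weight matrix acting on the $\mathbf{h}_u$-slot of $\psi$ is anti-symmetric and that $\Phi$ does not depend on $\mathbf{h}_u(t)$, so only the first summand inside $\sigma(\cdot)$ contributes to the relevant block of the Jacobian; the $\Phi$ term contributes zero.

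First I would apply the chain rule to obtain $\mathbf{J}(t) = \mathrm{diag}(\sigma'(z_u))\,\mathbf{W}_t\,\partial_{\mathbf{h}_u}\psi(\mathbf{h}_u,\mathbf{x}_u)$, where $z_u$ denotes the pre-activation. I then proceed case by case. For addition, $\partial_{\mathbf{h}_u}\psi = \mathbf{I}$, giving $\mathbf{J}(t) = \mathrm{diag}(\sigma')\,\mathbf{W}_t$ directly. For concatenation, writing $\mathbf{W}_t = [\mathbf{W}^{(h)}_t \;\; \mathbf{W}^{(x)}_t]$ in block form, one has $\partial_{\mathbf{h}_u}\psi = [\mathbf{I}\;\;\mathbf{0}]^\top$, so $\mathbf{W}_t\,\partial_{\mathbf{h}_u}\psi = \mathbf{W}^{(h)}_t$, and the anti-symmetry hypothesis is imposed precisely on this square sub-block. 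For the composition of $\tanh$ and concatenation, the chain rule introduces an extra factor $\mathrm{diag}(\tanh'(\mathbf{h}_u \| \mathbf{x}_u))$, whose leading $d\times d$ sub-block is a positive diagonal matrix that combines multiplicatively with $\mathrm{diag}(\sigma')$ on the outside.

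In every case the Jacobian therefore reduces to $D\,W$ with $W$ anti-symmetric of size $d\times d$ and $D$ a non-negative diagonal matrix (the product of the outer $\sigma'$ diagonal and, in the third case, the inner $\tanh'$ diagonal). The closing step is the standard observation that when $D$ has strictly positive entries, $DW$ is similar to $D^{1/2} W D^{1/2}$, which is itself anti-symmetric and hence has purely imaginary spectrum; this is essentially the same linear-algebraic core used in \citep{gravina2023antisymmetric}(Appendix~B) to prove Proposition~\ref{prop:space_time_non_dissip}, with the Jacobian taken in time rather than space.

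The main obstacle will be handling the saturating regime of $\sigma$ and $\tanh$: when some diagonal entries of $D$ vanish, the $D^{1/2}(\cdot)D^{-1/2}$ similarity trick breaks down. I would close this gap by a continuity/perturbation argument on the spectrum (eigenvalues depend continuously on matrix entries, and the purely-imaginary conclusion for $D+\varepsilon\mathbf{I}$ carries over to $D$ as $\varepsilon\to 0^+$), or equivalently by restricting to the ``active'' coordinate subspace on which $D$ is invertible. A secondary bookkeeping point is the concatenation case, where $\mathbf{W}_t$ is rectangular and one has to make explicit that the anti-symmetry condition of Proposition~\ref{prop:space_time_non_dissip} is read on the $d\times d$ sub-block $\mathbf{W}^{(h)}_t$ rather than on the full $\mathbf{W}_t$; once that convention is fixed, the three cases collapse to a single $D\,W$ argument and the proposition follows.
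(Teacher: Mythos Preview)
Your proposal is correct and essentially matches the paper's proof: both compute the Jacobian case by case, reduce it to the product of a positive diagonal matrix with an anti-symmetric matrix, and conclude that the spectrum is purely imaginary. The only cosmetic differences are that the paper closes with an eigenpair argument ($\mathbf{v}^{*}\mathbf{W}_t\mathbf{v} = \lambda\,\mathbf{v}^{*}\mathbf{A}^{-1}\mathbf{v}$) rather than your similarity $DW \sim D^{1/2}WD^{1/2}$, and that the paper silently assumes the diagonal factor is invertible whereas you explicitly close that gap with a limiting argument.
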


\begin{proof}
Let's consider $\psi=\mathbf{h}_u(t)+\mathbf{x}_u(t)$, \ie \textbf{addition}. In this case Eq.~\ref{eq:our_ode_time} can be reformulated as 
\begin{equation}\label{eq:addition_stace_and_time}
    \frac{\partial \mathbf{h}_u(t)}{\partial t} = \sigma \Bigl(\mathbf{W}_t\mathbf{h}_u(t) + \mathbf{W}_t\mathbf{x}_u(t) +  \Phi\left(\{(\mathbf{h}_u(t)+\mathbf{x}_u(t)),\mathbf{e}_{uv}, t_v^-, t \}_{v\in\mathcal{N}_u^t}\right) \Bigr). 
\end{equation}
The Jacobian matrix of Eq.~\ref{eq:addition_stace_and_time} is defined as 
\begin{equation}
    \mathbf{J}(t) = \text{diag}\left[\sigma^\prime \Bigl(\mathbf{W}_t\mathbf{h}_u(t) + \mathbf{W}_t\mathbf{x}_u(t) +  \Phi\left(\{(\mathbf{h}_u(t)+\mathbf{x}_u(t)),\mathbf{e}_{uv}, t_v^-, t \}_{v\in\mathcal{N}_u^t}\right) \Bigr)\right]\mathbf{W}_t.
\end{equation}
Thus, it is the result of a matrix multiplication between invertible diagonal matrix and a weight matrix. Imposing $\mathbf{A}=\text{diag}\left[\sigma^\prime \left(\mathbf{W}_t \mathbf{h}_u(t) +  
\Phi\left(\{\mathbf{h}_v(t),\mathbf{e}_{uv}, t_v^-, t \}_{v\in\mathcal{N}_u^t}\right) + \mathbf{b}_t \right)\right]$, then the Jacobian can be rewritten as $\mathbf{J}(t)=\mathbf{A}\mathbf{W}_t$.

Let us now consider an eigenpair of $\mathbf{A} \mathbf{W}_t$, where the eigenvector is denoted by $\mathbf{v}$ and the eigenvalue by $\lambda$. Then:
\begin{align}
\label{eq:eigen}
    \mathbf{A}\mathbf{W}_t\mathbf{v} &= \lambda \mathbf{v},\notag \\
    \mathbf{W}_t\mathbf{v} &= \lambda \mathbf{A}^{-1}\mathbf{v},\notag \\
    \mathbf{v}^*\mathbf{W}_t\mathbf{v} &= \lambda (\mathbf{v}^*\mathbf{A}^{-1}\mathbf{v})
\end{align}
where $*$ represents the conjugate transpose.
On the right-hand side of Eq.~\ref{eq:eigen}, we can notice that the $(\mathbf{v}^*\mathbf{A}^{-1}\mathbf{v})$ term  is a real number. 
If the weight matrix $\mathbf{W}_t$ is anti-symmetric (\ie skew-symmetric), then it is true that  $\mathbf{W}_t^* = \mathbf{W}_t^\top=-\mathbf{W}_t$. Therefore, 
$(\mathbf{v}^*\mathbf{W}_t\mathbf{v})^* = \mathbf{v}^*\mathbf{W}_t^*\mathbf{v} = -\mathbf{v}^*\mathbf{W}_t\mathbf{v}$. Hence, 
the $\mathbf{v}^*\mathbf{W}_t\mathbf{v}$ term on the left-hand side of Eq.~\ref{eq:eigen} is an imaginary number.
Thereby, $\lambda$ needs to be purely imaginary, and, as a result, all eigenvalues of $\mathbf{J}(t)$ are purely imaginary.

Let's now consider $\psi=\mathbf{h}_u(t)\|\mathbf{x}_u(t)$, \ie \textbf{concatenation}. In this case, the product $\mathbf{W}_t(\mathbf{h}_u(t)\|\mathbf{x}_u(t))$ can be decomposed as $\mathbf{K}_t\mathbf{h}_u(t) + \mathbf{V}_t\mathbf{x}_u(t)$, with $\mathbf{K}_t$ and $\mathbf{V}_t$ weight matrices. Similarly to the addition case, the Jacobian has purely imaginary eigenvalues.

Lastly, we consider the case of $\psi=tanh(\mathbf{h}_u(t)\|\mathbf{x}_u(t))$, \ie the \textbf{composition of \textit{tanh} and concatenation}. Here, Eq.~\ref{eq:our_ode_time} is
\begin{equation}\label{eq:our_ode_time_tanh}
\frac{\partial \mathbf{h}_u(t)}{\partial t} = \sigma \bigg(\mathbf{W}_t tanh(\mathbf{h}_u(t)) + \mathbf{V}_t tanh(\mathbf{x}_u(t))  +  \Phi\left(\{tanh(\mathbf{h}_u(t)\|\mathbf{x}_u(t)),\mathbf{e}_{uv}, t_v^-, t \}_{v\in\mathcal{N}_u^t}\right) \bigg). 
\end{equation}
The Jacobian matrix is the results of the multiplication of three matrices, i.e., $\mathbf{J}(t)=\mathbf{A}\mathbf{B}\mathbf{W}_t$, with $\mathbf{A} = \mathrm{diag}\left[\sigma' \left(\mathbf{W}_ttanh(\mathbf{h}_u(t)) + \mathbf{V}_ttanh(\mathbf{x}_u(t)) +\Phi(...) + \mathbf{b}\right)\right]$ and $\mathbf{B} = \mathrm{diag}[1-tanh^2(\mathbf{h}_u(t))]$. Thanks to the associative property of multiplication $\mathbf{J}(t)=\mathbf{A}\mathbf{B}\mathbf{W}_t=(\mathbf{A}\mathbf{B})\mathbf{W}_t= \mathbf{D}\mathbf{W}_t$, where $\mathbf{D}$ is the result of the multiplication of two diagonal matrices, thus $\mathbf{D}$ is diagonal. As detailed 
for the addition case, we can conclude that the Jacobian matrix has purely imaginary eigenvalues.
\end{proof}

As a counterexample, if $\psi=\mathbf{x}_u(t)$, Eq.~\ref{eq:our_ode_time} can result in a dissipative behavior, leading to the loss of information over time and compromising the model's ability to preserve historical context, since past node information is always discarded between new events. As a result, the function $\psi$ can function as a parameter to control the balance between the dissipative and non-dissipative behavior of CTAN.

\section{Stability of the Forward Euler's Method}\label{app:euler_stability}
Following \cite{stableEuler2}, the Euler’s forward method applied to Eq.~\ref{eq:our_ode} is considered stable when $(1+\epsilon\lambda(\mathbf{J}(t)))$ lies within the unit circle in the complex plane for all eigenvalues of the system. However, since the eigenvalues of the Jacobian matrix are exclusively imaginary, it follows that $|1+\epsilon\lambda(\mathbf{J}(t))| > 1$, thus Eq.~\ref{eq:our_ode} is unstable when solved with forward Euler's method.

To enhance the stability of the numerical discretization method, we subtract a small positive constant $\gamma>0$ from the diagonal elements of the weight matrix $\mathbf{W}$. This adjustment allows the eigenvalues of the Jacobian to possess a slightly negative real part, which positions $(1+\epsilon\lambda(\mathbf{J}(t)))$ within the unit circle and enhancing the stability of the numerical discretization method. However, as detailed in Section~\ref{sec:ctan}, since $Re(\lambda_i(\mathbf{J}(t))) < 0$, the ODE becomes slightly dissipative. In conclusion, the term $\gamma$ can serve as a parameter for balancing the dissipative and non-dissipative behavior.

\section{Summary of CTAN's Propagation Capacity}
In this section, we gather the information regarding the theoretically infinite propagation capacity of our method \adgn.
Section~\ref{sec:ctan} provides the theoretical conditions (see Proposition~\ref{prop:space_time_non_dissip}) under which CTAN is non-dissipative over space and time (Definition~\ref{def:non-dissip} and \ref{def:nd-time}), allowing for the preservation of historical node context over time while propagating event information spatially through the C-TDG. When Proposition~\ref{prop:space_time_non_dissip} is satisfied, the information propagation rate is constant independently of time, since the magnitude of $\partial\mathbf{h}(t)/\partial\mathbf{h}(0)$ is constant over time. As a consequence, theoretically, there is always information flowing within the CTDG modeled by a non-dissipative (space and time) model.

For propagation over time, Appendix~\ref{app:temporal_analysis} discusses which choices of $\psi$ guarantee non-dissipativeness over time, in this case the upper bound of the range length of propagation is theoretically infinite. See Appendix~\ref{app:temporal_analysis} for examples of dissipative and non-dissipative $\psi$ functions.

For propagation over space, Section~\ref{sec:ctan} - paragraph ``Truncated non-dissipative propagation" discusses how the terminal diffusion time $t_e$ influences the range length of propagation, which is lower bounded by the number of GCLs.

\section{Datasets Description and Statistics}\label{app:datasets}
Table~\ref{tab:stats} contains the statistics of the employed datasets. In the following, we describe the datasets and their generation.  

\begin{table}[ht]
\centering
\caption{Statistics of the datasets used in our experiments. We report the total number of nodes and edges in the dataset for the temporal path graph (\ie T-PathGraph) and temporal Pascal VOC (\ie T-PascalVOC).\label{tab:stats}\\ }

\footnotesize
\begingroup
\setlength{\tabcolsep}{5pt} 
\begin{tabular}{lccccc}
\hline\toprule
& \textbf{\# Nodes} & \textbf{\# Edges}  & \textbf{\# Edge ft.}    & \textbf{Split} & \textbf{Surprise Index} \\\midrule

T-PathGraph & 3,000-20,000 & 2,000-19,000 & 1 & 70/15/15 & 1.0\\
T-PascalVOC$_{10}$ & 2,671,704 & 2,660,352 & 14 & 70/15/15 & 1.0\\
T-PascalVOC$_{30}$ & 2,990,466 & 2,906,113 & 14 & 70/15/15 & 1.0\\
Wikipedia & 9,227 & 157,474 & 172 & 70/15/15, Chronological & 0.42\\
Reddit & 11,000 & 672,447 & 172 & 70/15/15, Chronological & 0.18 \\
LastFM & 2,000 & 1,293,103 & 2 & 70/15/15, Chronological & 0.35 \\
MOOC & 7,144 & 411,749 & 4 & 70/15/15, Chronological & 0.79\\
tgbl-wiki-v2 & 9,227 & 157,474  & 172 & 70/15/15, Chronological & 0.108\\
tgbl-review-v2 & 352,637 & 4,873,540 & - & 70/15/15, Chronological & 0.987\\
tgbl-coin-v2 & 638,486 	 & 22,809,486 & - & 70/15/15, Chronological & 0.120\\
tgbl-comment & 994,790 	 	 & 44,314,507 & - & 70/15/15, Chronological & 0.823\\
\bottomrule\hline
\end{tabular}
\endgroup
\end{table}

\myparagraph{Sequence classification on temporal path graphs} To craft a temporal long-range problem, we first introduced a sequence classification problem on path graphs~\citep{Bondy1976}, which is a simple linear graph consisting of a sequence of nodes where each node is connected to the previous one. In the temporal domain, the nodes of the path graph appear sequentially over time from first to last (e.g., bottom-to-top in Figure~\ref{fig:path-graph}).

We define the task objective as the prediction of the feature seen in the first node (colored in orange in Figure~\ref{fig:path-graph}) by making the prediction leveraging only the last node representation (colored in red in Figure~\ref{fig:path-graph}) computed at the end of the sequence, i.e., when the last event appears. Note that this task is akin to the sequence classification task designed in~\citep{AntisymmetricRNN}, with the addition of a graph convolution.
We set the feature of the first node to be either $1$ or $-1$, while we set every other node and edge feature to be sampled uniformly in the range $[-1, 1]$. 
In other words, the feature $\mathbf{x}_{u_0}$ of the first node $u_0$ contains a signal to be remembered as noise is added through the propagations steps along the graph.
Formally, we create a C-TDG: $\mathcal{G} = \{o_t\, | \, t\in[t_0, t_n]\}$, such that $$o_t=(t, \,\mathcal{E}_\oplus,\,u_t,\,u_{t+1}, \mathbf{x}_{u_t}, \mathbf{x}_{u_{t+1}}, \mathbf{e}_{u_t, u_{t+1}}),$$ 
where $\mathbf{x}_{u_0} \sim \text{Bernoulli}(0.5)$\footnote{Note that we sample 1 or -1 rather than 0 or 1 to make the problem balanced around zero.}, and $\mathbf{x}_{u_j} \sim \mathcal{U}_{[-1, 1]}, \forall j > t_0$ and $\mathbf{e}_{u_t,u_{t+1}} \sim \mathcal{U}_{[-1, 1]}, \forall t$.

\begin{figure}[t]
\begin{center}
    \includegraphics[width=0.2\linewidth]{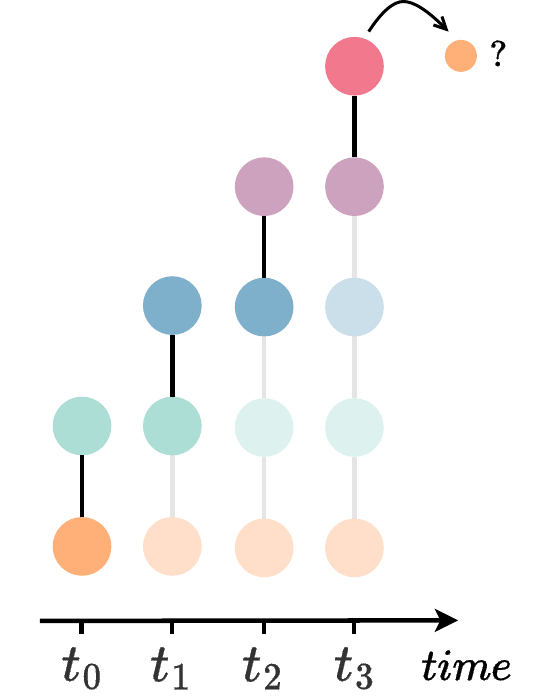}
\end{center}
\caption{The illustration of the sequence classification task on a temporal path graph consisting of 5 nodes. The first node (colored in orange) has an initial feature that can be either $1$ or $-1$. All the other nodes and edges have a feature set to random value sampled uniformly in $[-1, 1]$. At the end of the sequence, the representation computed for the last node (colored in red) is used to predict the original value of the first node. At each timestamp, the faded portion of the graph corresponds to historical information.}
\label{fig:path-graph}
\end{figure}

For this task we considered 8 temporal graph path datasets with different sizes, ranging from $n=3$ to $n=20$, with $n$ the number of nodes. For every graph size we generate 1,000 different graphs, and we split the dataset into train/val/test with the ratios 70\%-15\%-15\%.

\myparagraph{Temporal Pascal-VOC}
We use the \texttt{PascalVOC-SP}~\citep{dwivedi2022long} dataset to design a new temporal long-range task for edge classification. 
\texttt{PascalVOC-SP} is a node classification dataset composed of graphs created from the images in the Pascal VOC 2011 dataset~\citep{everingham2015pascal}. 
A graph is derived from each image by extracting superpixel nodes using the SLIC algorithm~\citep{achanta2012slic} and constructing a \texttt{rag-boundary} graph to interconnect these nodes.
Each node in a graph corresponds to one region of the image belonging to a particular class, see Figure~\ref{fig:pascalvoc-vis} for an example. 
\texttt{PascalVOC-SP} contains long-range interactions between spatially distant image patches, evidenced by its average shortest path length of 10.74 and average diameter of 27.62~\citep{dwivedi2022long}.

\begin{figure}[t]
\begin{center}
    \includegraphics[width=\linewidth]{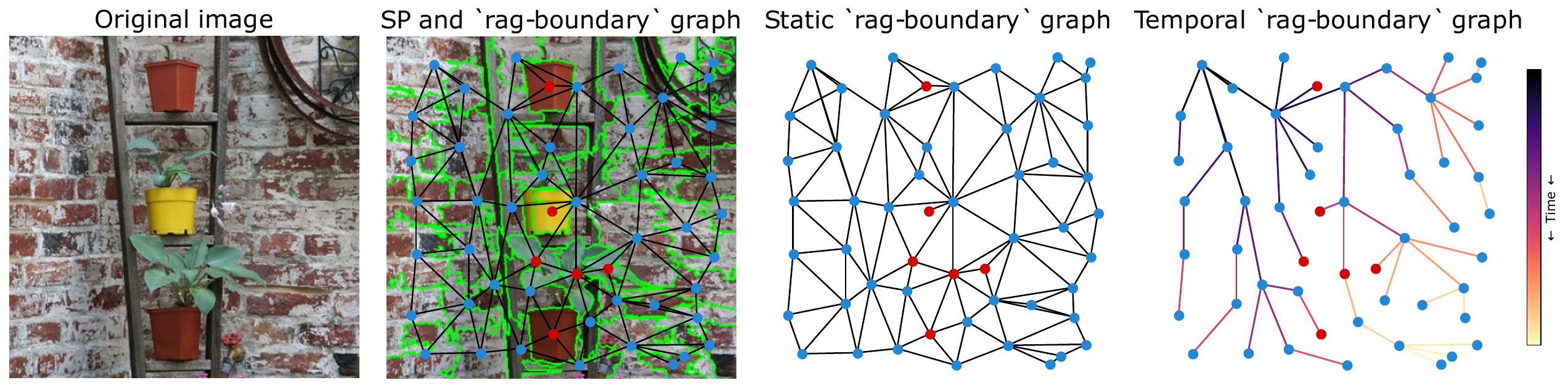}
\end{center}
\caption{Construction of the Temporal \texttt{PascalVOC-SP} dataset. The SLIC algorithm extracts patches from an image. We create the \texttt{rag-boundary} graph connecting neighboring patches based on spatial closeness. We construct a temporal graph by traversing from the topleftmost node with BFS. The goal of the task is to predict the class of the destination node at each visited edge - in the figure, either 'potted plant' (red) or 'background' (blue). For clarity in this visualization, the compactness of the SLIC algorithm is low.}
\label{fig:pascalvoc-vis}
\end{figure}

To craft a temporal task, we consider that nodes in a \texttt{rag-boundary} graph appear from the top-left to the bottom-right of the image, sequentially. 
We do so by selecting the top-leftmost node, \ie the one closest (by means of $L_1$ norm) to the origin in image coordinates.
From this node, we traverse the graph with a Breadth-First-Search, visiting each node exactly once. 
The order of edge traversal corresponds to the timestamp of edge appearance in the temporal task.
We set the task's objective to be the prediction of the class of the node that is being visited by the current edge. 
Note that the traversal removes a large number of edges from the initial graph, making the propagation of class information more difficult, see Figure~\ref{fig:pascalvoc-vis}.

Neighborhoods are constructed based on coordinates, connecting a node with its 8 spatially closest neighbors.
Nodes have 12 features extracted by channel-wise statistics on the image (mean, std, max, min) and 2 features defining the spatial location of the superpixel; we normalize these spatial features in the [0, 1] range.
We consider two SLIC superpixels compactness of 10 and 30 (smaller compactness means fewer patches).
To allow for batching, we fix the number of nodes in each graph, allowing batching of edges that occur at the same timestep across different graphs together.
To do so, we discard \texttt{rag-boundary}  graphs with fewer nodes than the limit, and discard excess nodes on graphs with more nodes than the limit, according to time (i.e., the most recent nodes are dropped).
This removes a small number of nodes corresponding to image patches on the bottom-right of the image.
In practice, for the two compactness levels 10 and 30, we set the number of minimum nodes per graph to be 434 and 474, which gives us 6,156 and 6,309 temporal graphs (out of the total 11,355 images in the dataset).
The resulting temporal datasets have 2,660,352 and 2,906,113 edges respectively.

\myparagraph{C-TDG benchmarks}
For the C-TDG benchmarks on future link prediction we consider four well-known datasets proposed by \cite{jodie}:
\begin{itemize}
    \item \textbf{Wikipedia}: one month of interactions (\ie 157,474 interactions) between user and Wikipedia pages. Specifically, it corresponds to the edits made by 8,227 users on the 1,000 most edited Wikipedia pages;
    \item \textbf{Reddit}: one month of posts (\ie interactions) made by 10,000 most active users on 1,000 most active subreddits, resulting in a total of 672,447 interactions;
    \item \textbf{LastFM}: one month of who-listens-to-which song information. The dataset consists of 1000 users and the 1000 most listened songs, resulting in 1,293,103 interactions.
    \item \textbf{MOOC}: it consists of actions done by students on a MOOC online course. The dataset contains 7,047 students (\ie users) and 98 items (\eg videos and answers), resulting in 411,749 interactions.
\end{itemize}
Since the datasets do not contain negative instances, we perform negative sampling by randomly sampling non-occurring links in the graph, as follows: (i) during training we sample negative destinations only from nodes that appear in the training set, (ii) during validation we sample them from nodes that appear in training set or validation set and (iii) during testing we sample them from the entire node set. 

For all the datasets, we considered the same chronological split into train/val/test with the ratios 70\%-15\%-15\% as proposed by \cite{TGAT}.

\myparagraph{Transductive vs Inductive Settings}
In Section~\ref{sec:exp_jodie} we employed transductive setting and random negative sampling as in~\cite{jodie, TGAT, tgn_rossi2020, yu2023towards, cong2023we}.
We chose not to employ an inductive setting as it is not easily applicable to C-TDGs. Specifically, there is no clear consensus in the literature regarding the definition of inductive settings, making it difficult to identify the nodes considered for assessing this experimental setup (e.g. \cite{TGAT} differs from \cite{tgn_rossi2020}). Some definitions of inductive settings lead to the number of sampled inductive nodes to be not statistically relevant for evaluation.
Other interpretations of inductive settings disrupt the true dynamics of the graph, i.e., in~\cite{tgn_rossi2020}, certain nodes and their associated edges are removed from the training set with the purpose of isolating an inductive set of nodes.
Thanks to the analysis performed in~\cite{yu2023towards}, we can also observe that among all the considered datasets in our paper there is mix of inductive and transductive edges, which can be measured with the \textit{surprise index} from~\cite{yu2023towards}, measuring the proportion of unseen edges at test time; reported in Table~\ref{tab:stats}.
Hence, achieving strong performance on tasks with a high surprise index offers valuable insights into the model's capability to address the inductive setting. Comparing \adgn performance to the surprise index, it is clear that \adgn can cope reasonably well even in fully inductive tasks
, such as those in Section~\ref{sec:exp_non-diss} where it generally ranks first among other baselines.

\section{Explored Hyper-Parameter Space}\label{app:grids}
In Table~\ref{tab:hyper_param_merged} we report the grids of hyper-parameters employed in our experiments by each
method. We recall that the hyper-parameters $\epsilon$, $\gamma$, and $\psi$ refer only to our method.
We used dropout only for GraphMixer and DyGFormer, where the values are loosely based on best-performing values in~\citet{yu2023empirical}.

\begin{table}[h]
\renewcommand{\arraystretch}{1.15}
\caption{The grid of hyper-parameters employed during model selection for the following three tasks: \textcolor{seqcolor}{Sequence classification on temporal path graphs}, \textcolor{pascalcolor}{Temporal Pascal-VOC}, and \textcolor{linkcolor}{Link Prediction} -- here, abbreviated as and color-coded in (\emph{Seq}, orange), (\emph{Pasc}, green), and (\emph{Link}, blue), respectively. 
For \emph{Seq} and \emph{Pasc}, we conducted \textcolor{seqcolor}{10 runs} and \textcolor{pascalcolor}{5 runs} with different random seeds for different weight initializations \emph{for each configuration}, whereas for \emph{Link}, we conducted \textcolor{linkcolor}{5 runs} only for the configuration that resulted in the best performance in the initial run. 
For the three tasks, the models were configured to have a maximum number of learnable parameters of \textcolor{seqcolor}{$\sim$20k}, \textcolor{pascalcolor}{$\sim$40k}, and \textcolor{linkcolor}{$\sim$140k}, respectively. 
Training was conducted for \textcolor{seqcolor}{20 epochs}, \textcolor{pascalcolor}{200 epochs}, and \textcolor{linkcolor}{1000 epochs}, respectively. For \emph{Seq} and \emph{Pasc}, we employed a scheduler \emph{halving the learning rate} with a patience of \textcolor{seqcolor}{5 epochs}, \textcolor{pascalcolor}{20 epochs}, respectively, whereas for \emph{Link} we used \textcolor{linkcolor}{early stopping} with a patience of \textcolor{linkcolor}{50 epochs}. 
For all tasks, the neighbor sampler size was set to 5. The batch size was set to \textcolor{seqcolor}{128}, \textcolor{pascalcolor}{256}, and \textcolor{linkcolor}{256}, respectively.
We used the \textcolor{seqcolor}{loss}, \textcolor{pascalcolor}{F1-score}, and \textcolor{linkcolor}{AUC} on the validation set to optimize for the hyper-parameters.
We used dropout only for GraphMixer and DyGFormer, where the values are loosely based on best-performing values in~\cite{yu2023empirical}.
%
\label{tab:hyper_param_merged}
\\}
\centering
\footnotesize
\begin{tabular}{l|l|ccc}
\hline\toprule
\multirow{2}{*}{\textbf{Hyper-parameter}} & \multirow{2}{*}{\textbf{Method}} & \multicolumn{3}{c}{\textbf{Values}}\\[0.6mm]
&& \textcolor{seqcolor}{Seq} & \textcolor{pascalcolor}{Pasc} & \textcolor{linkcolor}{Link} \\\midrule
\multicolumn{2}{l|}{optimizer} & \multicolumn{3}{c}{Adam} \\
\multicolumn{2}{l|}{learning rate} & \textcolor{seqcolor}{$3\cdot10^{-4}$} & \textcolor{pascalcolor}{$3\cdot10^{-4}$} & \textcolor{linkcolor}{$10^{-4}$, $10^{-5}$} \\
\multicolumn{2}{l|}{weight decay} & \textcolor{seqcolor}{$10^{-7}$} & \textcolor{pascalcolor}{$10^{-5}$} & \textcolor{linkcolor}{$10^{-6}$} \\
\multicolumn{2}{l|}{n. GCLs} & \multicolumn{3}{c}{1, 3, 5} \\
\multicolumn{2}{l|}{$\sigma$} & \multicolumn{3}{c}{tanh} \\
\multicolumn{2}{l|}{$\epsilon$} & \textcolor{seqcolor}{1, 0.5, $10^{-1}$, $10^{-2}$} & \textcolor{pascalcolor}{1, 0.5, $10^{-1}$, $10^{-2}$} & \textcolor{linkcolor}{0.5, $10^{-1}$, $10^{-2}$, $10^{-3}$} \\
\multicolumn{2}{l|}{$\gamma$} & \textcolor{seqcolor}{1, 0.5, $10^{-1}$, $10^{-2}$} & \textcolor{pascalcolor}{1, 0.5, $10^{-1}$, $10^{-2}$} & \textcolor{linkcolor}{0.5, $10^{-1}$, $10^{-2}$, $10^{-3}$} \\
\multicolumn{2}{l|}{$\psi$} & \multicolumn{3}{c}{concat, $\psi = \text{tanh}(\textbf{h}^{i-1}(t_e)||\mathbf{x}(i))$} \\
\multicolumn{2}{l|}{dropout} & \textcolor{seqcolor}{0.1, 0.2} & \textcolor{pascalcolor}{0.1, 0.2} & $-$ \\
\multicolumn{2}{l|}{time dim} & \textcolor{seqcolor}{1} & \textcolor{pascalcolor}{1} & \textcolor{linkcolor}{16} \\\midrule
\multirow{7}{*}{memory dim (= DGN dim)}& DyGFormer & \textcolor{seqcolor}{10, 5} & \textcolor{pascalcolor}{14, 7} & $-$ \\
& DyRep & \textcolor{seqcolor}{53, 26} & \textcolor{pascalcolor}{74, 37} & \textcolor{linkcolor}{118, 87} \\
& GraphMixer & \textcolor{seqcolor}{30, 15} & \textcolor{pascalcolor}{24, 12} & $-$ \\
& JODIE & \textcolor{seqcolor}{69, 34} & \textcolor{pascalcolor}{97, 48} & \textcolor{linkcolor}{164, 122} \\
& TGAT & \textcolor{seqcolor}{24, 12} & \textcolor{pascalcolor}{24, 12} & \textcolor{linkcolor}{33, 23} \\
 & TGN & \textcolor{seqcolor}{19, 9} & \textcolor{pascalcolor}{21, 10} & \textcolor{linkcolor}{33, 20} \\
& \adgn & \textcolor{seqcolor}{53, 26} & \textcolor{pascalcolor}{74, 37} & \textcolor{linkcolor}{128, 96} \\

\bottomrule\hline
\end{tabular}
\end{table}
\section{Complete Results}\label{app:complete_results}
In Table~\ref{tab:results-classif-path-graph} we report the results on the sequence classification task on temporal path graphs, and in Table~\ref{tab:fair_results_complete} we show the complete results on the link prediction task, including the performance of EdgeBank~\citep{edgebank} with different time window sizes. 
EdgeBank is  a memorization-based method without learning that simply stores previously observed edges from a fixed-size time-window from the immediate past, and predicts stored edges as positive. We evaluated EdgeBank with different time windows spanning from a size of 1\% of the training set to infinite size, \ie all observed edges are stored in memory.

In this scenario, we observe that EdgeBank is particularly good at capturing long-range information along the time dimension in the LastFM task, surpassing all the baselines and \adgn as the time window increases. We highlight that the experiments in Section~\ref{sec:exp_jodie} are meant to outline how \adgn outperforms baselines under an \textit{even field} of number of trainable parameters (i.e., 140k) and restricted range of hyper-parameter values, \eg sampler size equal to 5. On the other hand, EdgeBank is a non-parametric method that at the time of inference accesses the entire temporal adjacency matrix. In LastFM, the median node degree after training is 903 (mean $1152\pm1722$), which is high compared to other datasets. At validation time, for the average node in LastFM, EdgeBank pools information from 903 node neighbors, while the setting in Section~\ref{sec:exp_jodie} allows baselines to pool information from 5 randomly sampled neighbors. As nodes have larger degrees, sampling larger neighborhoods is fundamental to access and therefore retain information. To show that \adgn performance is limited by the considered range of hyper-parameter values, we present in Table~\ref{tab:lastfm-investigation} the performance of \adgn by solely adjusting the neighbor sampler size, while maintaining a budget of $\sim$350k learnable parameters. The evaluation involves substituting various sampler size values into the optimal combination of hyper-parameters obtained for \adgn on the LastFM dataset (as in Section~\ref{sec:exp_jodie}), with the embedding dimension configured to achieve the target of $\sim$350k learnable parameters (i.e., 192). The results indicate that \adgn performs better by adjusting the sampler size alone.

\begin{table}[]
\centering
\caption{Results of the sequence classification on path graph long-range task, for increasing graph length~$n$. The performance metric is the mean test set accuracy score, averaged over 10 different random weights initializations for each model configuration. Models have a maximum budget of learnable parameters equal to $\sim$20k.\label{tab:results-classif-path-graph}\\}

\footnotesize
\begingroup
\setlength{\tabcolsep}{3pt} 

\begin{tabular}{lcccccccc}
\hline\toprule 
 & $n$=3 & $n$=5 & $n$=7 & $n$=9 & $n$=11 & $n$=13 & $n$=15 & $n$=20 \\
\midrule
DyGFormer    &      \one{100.0$_{\pm0.0}$} &     42.55$_{\pm16.95}$ &       52.94$_{\pm7.3}$ &      53.02$_{\pm6.06}$ &        51.80$_{\pm9.52}$ &        51.70$_{\pm8.52}$ &       42.80$_{\pm16.25}$ &      42.79$_{\pm19.62}$  \\
DyRep        &       \one{100.0$_{\pm0.0}$} &        49.20$_{\pm2.10}$ &       51.00$_{\pm1.76}$ &      47.93$_{\pm2.73}$ &       44.87$_{\pm0.89}$ &       46.73$_{\pm1.55}$ &        48.60$_{\pm2.48}$ &       50.47$_{\pm2.88}$ \\
GraphMixer   &       \one{100.0$_{\pm0.0}$} &      42.58$_{\pm21.2}$ &       55.40$_{\pm6.44}$ &       52.80$_{\pm5.56}$ &      44.65$_{\pm19.42}$ &      43.77$_{\pm16.51}$ &       52.49$_{\pm5.36}$ &        52.04$_{\pm8.20}$ \\
JODIE        &       \one{100.0$_{\pm0.0}$} &       \one{100.0$_{\pm0.0}$} &       \one{100.0$_{\pm0.0}$} &       \one{100.0$_{\pm0.0}$} &       98.53$_{\pm4.64}$ &        97.4$_{\pm7.99}$ &       60.0$_{\pm14.91}$ &       50.87$_{\pm2.46}$ \\
TGAT         &       \one{100.0$_{\pm0.0}$} &       \one{100.0$_{\pm0.0}$} &      50.67$_{\pm4.12}$ &      47.87$_{\pm2.72}$ &       42.67$_{\pm2.15}$ &       43.53$_{\pm0.83}$ &       50.53$_{\pm2.15}$ &       49.07$_{\pm1.55}$ \\
TGN          &       \one{100.0$_{\pm0.0}$} &       \one{100.0$_{\pm0.0}$} &       60.20$_{\pm13.2}$ &      48.13$_{\pm1.63}$ &       45.07$_{\pm1.64}$ &        44.40$_{\pm0.64}$ &       48.67$_{\pm2.76}$ &       50.13$_{\pm2.17}$ \vspace{3pt}\\

Our    &       \one{100.0$_{\pm0.0}$} &      \one{100.0$_{\pm0.0}$} &       \one{100.0$_{\pm0.0}$} &      99.93$_{\pm0.21}$ &        \one{99.6$_{\pm0.56}$} &       \one{98.67$_{\pm1.89}$} &       \one{93.47$_{\pm8.78}$} &      \one{88.93$_{\pm12.06}$} \\
\bottomrule\hline 

\end{tabular}

\endgroup

\end{table}

\begin{table}[ht]
\centering
\caption{Mean test set AUC and std in percent averaged over 5 random weight initializations. Each model have a maximum budget of learnable weights equal to $\sim$140k. The higher, the better.\\\label{tab:fair_results_complete}}

\footnotesize
\begin{tabular}{lcccc}
\hline\toprule
& \textbf{Wikipedia} & \textbf{Reddit} & \textbf{LastFM} & \textbf{MOOC}

\\\midrule 
EdgeBank$_{1\% \,tr\, set}$         & 71.03 
                            & 71.92 
                            & 77.59 
                            & 61.29 
                            \\
EdgeBank$_{5\% \,tr\, set}$         & 81.65 
                            & 85.07 
                            & 86.75 
                            & 63.93 
                            \\
EdgeBank$_{10\% \,tr\, set}$        & 85.26 
                            & 89.07 
                            & 89.87 
                            & 65.18 
                            \\
EdgeBank$_{25\% \,tr\, set}$        & 88.31 
                            & 92.92 
                            & 92.74 
                            & 67.49 
                            \\
EdgeBank$_{50\% \,tr\, set}$        & 90.29 
                            & 94.82 
                            & 94.06 
                            & 69.63 
                            \\
EdgeBank$_{75\% \,tr\, set}$        & 91.11 
                            & 95.63 
                            & 94.55 
                            & 70.46 
                            \\
EdgeBank$_{100\% \,tr\, set}$       & 91.52 
                            & 96.08 
                            & 94.69 
                            & 70.80 
                            \\
EdgeBank$_\infty$           & 91.82 
                            & 96.42 
                            & \one{94.72} 
                            & 70.85 
                        \\\midrule 
DyRep & 88.64$_{\pm0.15}$ 
      & 97.51$_{\pm0.10}$ 
      & 77.89$_{\pm1.39}$ 
      & 81.87$_{\pm2.47}$ 
      \\
JODIE & 94.68$_{\pm1.05}$ 
      & 96.34$_{\pm0.83}$ 
      & 69.76$_{\pm2.74}$ 
      & 81.90$_{\pm9.03}$ 
      \\
TGAT & 94.91$_{\pm0.25}$ 
     & 98.18$_{\pm0.05}$ 
     & 81.53$_{\pm0.34}$ 
     & 87.61$_{\pm0.15}$ 
     \\
TGN & 95.60$_{\pm0.18}$ 
    & 98.23$_{\pm0.10}$ 
    & 79.18$_{\pm0.79}$ 
    & 90.74$_{\pm0.99}$
\vspace{3pt}\\
Our & \one{97.55$_{\pm0.09}$} 
          & \one{98.61$_{\pm0.04}$} 
          & \one{83.81$_{\pm0.92}$} 
          & \one{92.47$_{\pm0.78}$} 
          \\
\bottomrule\hline 
\end{tabular}
\end{table}

\begin{table}[h!]
\renewcommand{\arraystretch}{1.3}
\centering
\caption{Mean test set AUC and std on LastFM (in percent) for increasing size of sampled neighbors, averaged over three different weights initializations. The model has a budget of learnable weights equal to $\sim$350k. When nodes have large degrees as in LastFM, accessing larger neighborhoods with the neighbor sampler is fundamental to access and retain important information.\\\label{tab:lastfm-investigation}}
\footnotesize
\begingroup
\begin{tabular}{lcccccc}
\hline\toprule
\textbf{Sampler size} & 2 & 8 & 16 & 32 & 64 & 128 \\
\midrule
\adgn          &    82.64$_{\pm0.93}$   &  86.21$_{\pm0.58}$   &   86.16$_{\pm0.55}$   &   86.27$_{\pm0.55}$   &    86.32$_{\pm0.81}$ & \one{87.82$_{\pm0.42}$}   \\
\bottomrule\hline 
\end{tabular}
\endgroup
\end{table}

We report the average time per epoch (measured in seconds) for each model on the four considered link prediction datasets in Table~\ref{tab:time}. In this evaluation, each model has the same embedding dimension and number of GCLs. Similarly, Figure~\ref{fig:time} shows the average time per epoch of each model on the Wikipedia dataset. Here, the time is reported with respect to a varying embedding size and similar number of GCLs. We observe that our method has a speedup on average of $1.3\times$ to $2.2\times$ on the four benchmarks when one layer of graph convolutions is considered, and $1.5\times$ to $1.9\times$ when five layers are used.

\begin{table}[h!]
\centering
\caption{Mean time (in seconds) and std averaged over 10 epochs. Each model is run with an embedding dimension equal to 100 on an Intel(R) Xeon(R) Gold 6278C CPU @ 2.60GHz.\\\label{tab:time}}
\footnotesize
\begin{tabular}{rlcccc}
\hline\toprule
&    &    \textbf{Wikipedia}   &     \textbf{Reddit}     &       \textbf{LastFM}     &     \textbf{MOOC} \\\midrule
\multirow{6}{*}{1 layer}
& DyRep     & 27.07$_{\pm0.32}$       & 161.43$_{\pm0.96}$      & 216.88$_{\pm2.83}$        & 53.32$_{\pm0.56}$ \\
& JODIE     & 20.62$_{\pm0.24}$       & 131.71$_{\pm0.85}$      & 176.61$_{\pm3.02}$        & 43.92$_{\pm0.68}$ \\
& TGAT      & 11.56$_{\pm0.14}$       & 67.83$_{\pm0.64}$       & 139.79$_{\pm20.78}$       & \one{33.92$_{\pm0.50}$} \\
& TGN       & 30.92$_{\pm0.25}$       & 196.87$_{\pm1.35}$      & 289.22$_{\pm30.38}$       & 53.46$_{\pm0.62}$\vspace{3pt}\\
& Our       & \one{11.16$_{\pm0.11}$} & \one{64.48$_{\pm0.56}$} & \one{123.19$_{\pm11.33}$} & 34.42$_{\pm0.50}$ \\\midrule


\multirow{4}{*}{5 layer} &
TGAT        & 101.26$_{\pm0.46}$      & 895.35$_{\pm5.46}$       & 862.47$_{\pm217.38}$       & 73.77$_{\pm1.29}$ \\ 
& TGN       & 127.99$_{\pm0.60}$      & 1099.19$_{\pm3.91}$      & 1034.24$_{\pm221.04}$      & 95.45$_{\pm1.07}$\vspace{3pt}\\
& Our       & \one{60.16$_{\pm0.20}$} & \one{532.36$_{\pm9.87}$} & \one{495.18$_{\pm111.13}$} & \one{56.19$_{\pm0.63}$} \\


\bottomrule\hline 
\end{tabular}
\end{table}

\begin{figure}[h!]
\begin{center}
    \includegraphics[width=0.8\linewidth]{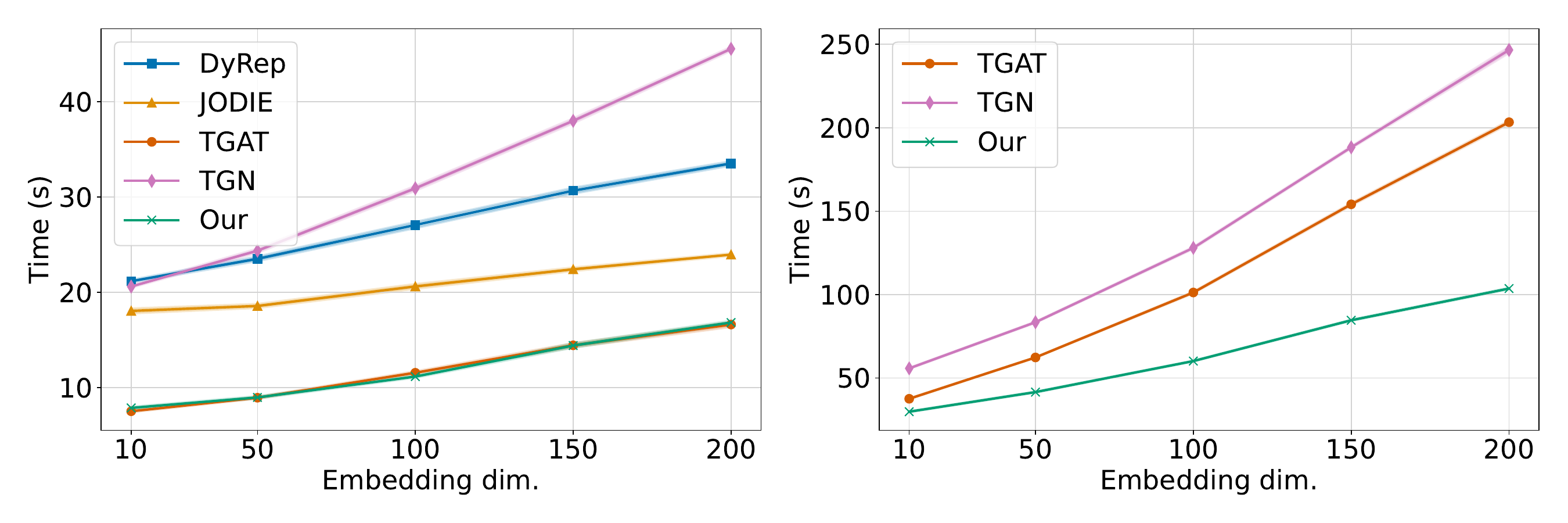}
\end{center}
\vspace{-0.3cm}
\hspace{5.2cm}(a)\hspace{6.4cm}(b)
\caption{Average time per epoch (measured in seconds) and std with respect to the embedding size computed on the Wikipedia dataset, averaged over 10 epochs. The experiments were carried out on an Intel(R) Xeon(R) Gold 6278C CPU @ 2.60GHz. On the left (a), each model has 1 DGN layer (when possible), while on the right (b) the models have 5 GCLs.}
\label{fig:time}
\end{figure}
\section{Results on TGB Benchmarks}\label{app:tgb_results}

We evaluate \adgn on the Temporal Graph Benchmark (TGB)~\citep{huang2023temporal}. 
TGB contains a set of real-world small-to-large scale benchmark datasets with varying graph properties. 
While TGB contains two different graph tasks, namely dynamic link property prediction and dynamic node property prediction, for consistency with the rest of the presentation in this paper we focus on the former.
To overcome the existing limitations on negative edge sampling, i.e., where only one random negative edge is sampled per each positive edge, TGB provides pre-sampled negative edge sets with both \textit{random} and \textit{historical} negatives~\citep{edgebank}.
Here, for each positive edge, several negatives are sampled for the  evaluation~\citep{huang2023temporal}. 
Please refer to \citep{huang2023temporal} for more information on datasets and tasks.

\myparagraph{Experimental Setup}
We adapt and re-use the TGB training loop and evaluation loop to fit our framework, and we select four datasets for measuring \adgn performance: \textit{tgbl-wiki-v2, tgbl-review-v2, tgbl-coin-v2, tgbl-comment}.
We use the configuration of \adgn, reported in Table~\ref{tab:hyper_param_tgb}.
Note that for computational efficiency, since validation passes are extremely costly given the large number of negative edges, we only do a validation pass every three training epochs.

\begin{table}[h!]
\renewcommand{\arraystretch}{1.3}
\caption{
The grid of hyper-parameters employed during model selection for \adgn on the Dynamic Link Property Prediction task on the three TGB benchmark datasets considered: \textcolor{seqcolor}{tgbl-wiki-v2}, \textcolor{pascalcolor}{tgbl-review-v2}, \textcolor{linkcolor}{tgbl-coin-v2}, \textcolor{fourth}{tgbl-comment}. For tgbl-wiki-v2 we conducted five runs with different random seeds for different weight initializations for each configuration, whereas for the other datasets we conducted three different runs.
The rest of the training configuration is taken from the TGB codebase: batch size is 200, weight decay penalty was 0, the optimized metric is Mean Reciprocal Rank and is evaluated with the TGB evaluator.\\
\label{tab:hyper_param_tgb}
}
\centering
\footnotesize
\begin{tabular}{l|cccc}
\hline\toprule
\textbf{Hyper-parameter} & \textcolor{seqcolor}{\textbf{tgbl-wiki-v2}} & \textcolor{pascalcolor}{\textbf{tgbl-review-v2}} & \textcolor{linkcolor}{\textbf{tgbl-coin-v2}} & \textcolor{fourth}{\textbf{tgbl-comment}} \\\midrule
optimizer & \multicolumn{4}{c}{Adam} \\
$\sigma$ & \multicolumn{4}{c}{tanh} \\
$\gamma$ & \textcolor{seqcolor}{0.1} & \textcolor{pascalcolor}{0.1, 0.01} & \textcolor{linkcolor}{0.1, 0.01} & \textcolor{fourth}{0.1} \\
$\psi$ & \multicolumn{4}{c}{concat, $\psi = \text{tanh}(\textbf{h}^{i-1}(t_e)||\mathbf{x}(i))$} \\
n. GCLs & \textcolor{seqcolor}{1, 2, 3} & \textcolor{pascalcolor}{1,2} & \textcolor{linkcolor}{1} & \textcolor{fourth}{1}\\
$\epsilon$ & \textcolor{seqcolor}{1.0} & \textcolor{pascalcolor}{0.5, 1.0} & \textcolor{linkcolor}{0.5, 1.0} & \textcolor{fourth}{1.0} \\
embedding dim & \multicolumn{4}{c}{256} \\
sampler size & \multicolumn{4}{c}{32} \\
learning rate &  \textcolor{seqcolor}{$10^{-3}$, $10^{-4}$, $3\cdot10^{-4}$, $3\cdot10^{-5}$} & \textcolor{pascalcolor}{$3\cdot10^{-6}$} & \textcolor{linkcolor}{$10^{-4}$} & \textcolor{fourth}{$10^{-4}$, $3\cdot10^{-4}$, $10^{-5}$, $3\cdot10^{-5}$}   \\
epochs & \textcolor{seqcolor}{200} & \textcolor{pascalcolor}{50} & \textcolor{linkcolor}{50} & \textcolor{fourth}{50} \\
LR scheduler patience & \textcolor{seqcolor}{20} & \textcolor{pascalcolor}{3} & \textcolor{linkcolor}{3} & \textcolor{fourth}{3} \\

\bottomrule\hline
\end{tabular}
\end{table}

\myparagraph{Results}
In Table~\ref{tab:tgb_results}, we report the test Mean Reciprocal Rank (MRR) for the experiments.
We note that \adgn performs quite well in general: its average rank across the four datasets is 3.25 which is the highest, together with DyGFormer~\citep{yu2023towards}.
\adgn performs quite well on tgbl-review-v2, even significantly outperforming state-of-the-art methods DyGFormer~\citep{yu2023towards} and GraphMixer~\citep{cong2023we}.
In such dataset the \textit{surprise index}~\citep{edgebank} is 0.987, meaning that nodes do not have large histories. In this case, it seems that \adgn  better propagates information from neighbors compared to methods focusing on first-hop information passing such as~\cite{cong2023we} and \cite{yu2023towards}.
On the other hand, it seems that the model in~\citep{yu2023towards} is well suited in propagating long-range \textit{time} information by modeling a large number of previous node interactions within the transformer input sequence, given enough computational budget, particularly in tgbl-wiki-v2, where nodes have long histories.
Nevertheless, we notice that even with limited number of parameters, \adgn is extremely competitive within the leaderboard.

\begin{table}[ht]
\renewcommand{\arraystretch}{1.1}
\centering
\caption{Results of the Dynamic Link Property Prediction task on the TGB benchmark datasets~\citep{huang2023temporal}. The table reports the average MRR on the test split of the datasets over the considered weight initializations. For \adgn, the average is taken over a maximum of five 
runs with different random seeds for different weight initializations. All baselines' results are taken from~\citep{yu2023empirical}. The number of parameters is computed from the TGB Baselines repository~\cite{huang2023temporal} by loading the best performing model across the model selection search.\\\label{tab:tgb_results}}
\small
\begin{tabular}{lc|cccc|c}
\hline\toprule 
&\textbf{N. params} & \textbf{tgbl-wiki-v2}  & \textbf{tgbl-review-v2}  & \textbf{tgbl-coin-v2} & \textbf{tgbl-comment}& \textbf{Avg. rank}\\\midrule
EdgeBank$_\infty$         & $-$  & 52.50         &  2.29         & 35.90 & 10.87 & 11 \\
EdgeBank$_{\text{tw-ts}}$ & $-$  & 63.25         &  2.94         & 57.36 & 12.44 & 8.25 \\
EdgeBank$_{\text{re}}$    & $-$  & 65.88         &  2.84         & 59.15 & $-$  & 8.25 \\
EdgeBank$_{\text{th}}$    & $-$  & 52.81         &  1.97         & 43.36 & $-$ & 11.33 \\\midrule
CAWN & 4M & 73.04$_{ \pm 0.60}$  & 19.30$_{ \pm 0.10}$ & $-$  & $-$ & 5.50\\
DyRep & 700k & 51.91$_{ \pm 1.95}$  & 40.06$_{ \pm 0.59}$ & 45.20$_{ \pm 4.60}$ & 28.90$_{ \pm 3.30}$ & 8.00 \\
GraphMixer & 600k & 59.75$_{ \pm 0.39}$  & 36.89$_{ \pm 1.50}$ & \one{75.57$_{ \pm 0.27}$} & \one{76.17$_{ \pm 0.17}$} & 4.25\\
DyGFormer & 1.1M & \one{79.83$_{ \pm 0.42}$}  & 22.39$_{ \pm 1.52}$ & 75.17$_{ \pm 0.38}$ & 67.03$_{ \pm 0.14}$ & \one{3.25} \\
JODIE & 200k & 63.05$_{ \pm 1.69}$  & \one{41.43$_{ \pm 0.15}$} & $-$ & $-$  & 4.50 \\
TCL & 900k & 78.11$_{ \pm 0.20}$  & 16.51$_{ \pm 1.85}$ & 68.66$_{ \pm 0.30}$ & 70.11$_{ \pm 0.83}$ & 4.25 \\
TGAT & 1.1M & 59.94$_{ \pm 1.63}$  & 19.64$_{ \pm 0.23}$ & 60.92$_{ \pm 0.57}$ & 56.20$_{ \pm 2.11}$ & 6.50 \\
TGN & 1M & 68.93$_{ \pm 0.53}$ &  37.48$_{ \pm 0.23}$ & 58.60$_{ \pm 3.70}$ & 37.90$_{ \pm 2.00}$  & 5.25  \vspace{3pt}\\
Our & 600k & 66.76$_{ \pm 0.74}$  & 40.52$_{ \pm 0.41}$ & 74.82$_{ \pm 0.42}$ & 67.10$_{ \pm 6.72}$ & \one{3.25} \\

\bottomrule\hline

\end{tabular}

\end{table}

\section{Scalability of \adgn}\label{app:scalability}

Here we conduct an investigation on the scalability property of \adgn. 
Note that while in some related works the term scalable refers to the computational complexity of methods, here we use scalable to refer to how the range of information propagation can be controlled by increasing the number of graph convolutions in \adgn.
To show this property, we show the results of the task in Section~\ref{sec:long_range} 
at different values of GCLs (when possible).
We report the results in Figure~\ref{fig:scalability-of-ctan}, which shows how for increasing GCLs, \adgn is capable of conveying information further away in the graph compared to other graph convolutional based models.
In addition, we observe that both DyGFormer and GraphMixer may have increased capability to capture long-range dependencies, however, this is only applicable to \emph{time}-only dependencies, and not spatial ones. Indeed, DyGFormer and GraphMixer model long-range time dependencies on node representations by fetching previous interactions for a node, both only relying on first-hop neighbors information and not considering spatial propagation of higher-order node information, which is in fact mentioned as a limitation of DyGFormer. Comparably, CTAN remains a graph convolution-based model, hence capable of propagating information in a non-dissipative way over time as well as over the spatial dimension of the graph, scaling the range of propagation with the number of discretization steps (equivalently, the termination time $t_e$). This property enables propagating information to neighbors beyond first-hop ones, which in turns allows solving tasks such as those in Section~\ref{sec:long_range} and \ref{sec:pascal} in the paper.

\begin{figure}[h!]
\begin{center}
    \includegraphics[width=\linewidth]{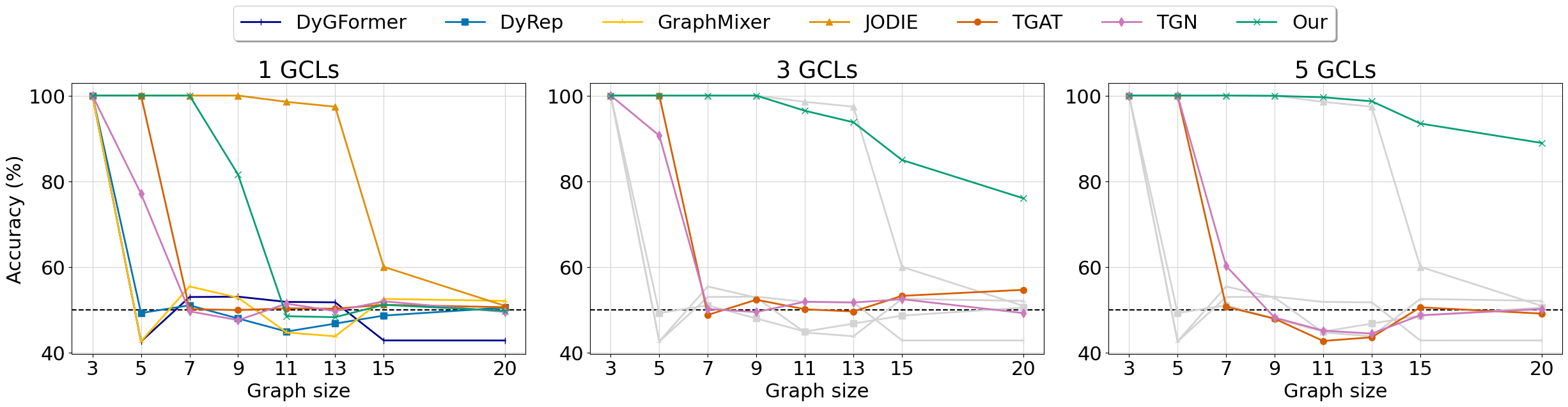}
\end{center}
\caption{Mean accuracy on the T-PathGraph task on the experiment of Section~\ref{sec:long_range}, with distinction between the performance at different number of GCLs 
(whenever possible). With 3 and 5 GCLs we report in grey the results of DyGFormer, DyRep, GraphMixer, and JODIE, which are designed for 1-hop aggregation only. The plots show that not only \adgn can better retain information at low number of GCLs, but also that increasing the number of GCL enables solving the T-PathGraph task on longer graphs, where the task is harder because information needs to be propagated further away. The number of GCL allows \adgn to \textit{scale} up the range of information propagation.}
\label{fig:scalability-of-ctan}
\end{figure}


\end{document}